\newtheorem{definition}{Definition}
\DeclareMathOperator*{\argmin}{argmin}
\DeclareMathOperator{\sign}{sign}
\newcommand{\abs}[1]{\left\lvert #1 \right\rvert}
\DeclareMathOperator*{\iid}{\texttt{iid}}
\newcommand{\norm}[1]{\left\lVert #1 \right\rVert}
\newcommand{\expec}[2]{\mathbb{E}_{#2}\left[ #1 \right] }
\newcommand\numberthis{\addtocounter{equation}{1}\tag{\theequation}}  %number one line in align*
\def\fn[#1]#2{{f_{#1}\left(x_{#2}\right)}}
\newtheorem{lemma}{Lemma}[section]
\newtheorem{theorem}{Theorem}[section]
\newtheorem{proposition}{Proposition}[section]
\newtheorem{assumption}{Assumption}[section]
\newtheorem{remark}{Remark}
\def\phidm{{\phi_{DM}}}
\def\phidi{{\phi_{DI}}}
\def\exp{{\rm exp}}
\def\bt{{\bar{\theta}}}
\def\tb{\theta^b}
\newenvironment{talign*}
 {\csname align*\endcsname}
 {\endalign}
\author{Abhishek Roy\thanks{Halıcıoğlu Data Science Institute, University of California, San Diego. \texttt{a2roy@ucsd.edu}.} 
\and Prasant Mohapatra\thanks{Department of Computer Science, University of California, Davis. \texttt{pmohapatra@ucdavis.edu}. } 
}
\title{Fairness Uncertainty Quantification: How certain are you that the model is fair?}
\begin{document}

\maketitle
\begin{abstract}
%\vspace{-0.01in}
Fairness-aware machine learning has garnered significant attention in recent years because of extensive use of machine learning in sensitive applications like judiciary systems. Various heuristics, and optimization frameworks have been proposed to enforce fairness in classification \cite{del2020review} where the later approaches either provides empirical results or provides fairness guarantee for the exact minimizer of the objective function \cite{celis2019classification}. In modern machine learning, Stochastic Gradient Descent (SGD) type algorithms are almost always used as training algorithms implying that the learned model, and consequently, its fairness properties are random. Hence, especially for crucial applications, it is imperative to construct Confidence Interval (CI) for the fairness of the learned model. In this work we provide CI for test unfairness when a group-fairness-aware, specifically, Disparate Impact (DI), and Disparate Mistreatment (DM) aware linear binary classifier is trained using online SGD-type algorithms. We show that asymptotically a Central Limit Theorem holds for the estimated model parameter of both DI and DM-aware models. We provide online multiplier bootstrap method to estimate the asymptotic covariance to construct online CI. To do so, we extend the known theoretical guarantees shown on the consistency of the online bootstrap method for unconstrained SGD to constrained optimization which could be of independent interest. We illustrate our results on synthetic and real datasets.
\end{abstract}
\vspace{-0.25in}
\section{Introduction}
\vspace{-0.05in}
Machine learning has become pervasive across different fields over the past decade. Widespread use of machine learning in crucial applications like medical and judiciary system has raised serious concerns regarding fairness, privacy, and interpretability of the learned models. Various heuristic methods and optimization frameworks have been proposed to enforce fairness in classification \cite{del2020review}. The later line of work either provides empirical results or provides fairness guarantee for the exact minimizer of the objective function \cite{celis2019classification} but does not shed any light on the variations of the fairness of the learned model resulting from the dynamics of the optimization algorithm used for training. Online stochastic optimization algorithms like Stochastic Gradient Descent (SGD) have become the training algorithm of choice given the success these algorithms have enjoyed for huge datasets. Stochasticity of training dynamics imply that the learned model is also random. So an empirical, even a theoretical bound on expected fairness may be inadequate for some high-risk applications, e.g., medical applications \cite{begoli2019need}, and judiciary system where the sentence of an individual is decided by a machine learning model \cite{zafar2019fairness}.
\vspace{-0.05in}
\begin{figure}[h] 
    \centering
    \includegraphics[width=82mm,height=56mm]{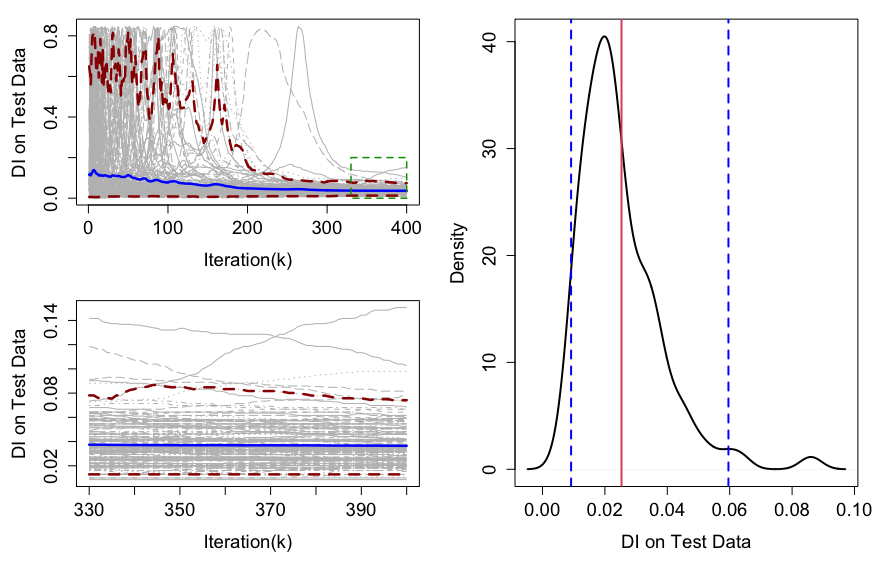}
    \caption{DI of the trained model varies considerably over repetitions, even after the mean level has stabilized. The mean DI after training is $0.025$ whereas the $97.5\%$ quantile is $0.042$ which is $64\%$ more!}
\end{figure}
\iffalse
\begin{figure}[h] 
    \centering
    \subfigure[DI on test data over training]{\label{fig:Motivation_testdi}\includegraphics[width=40mm]{}}
    \subfigure[Density of DI after training]{\label{fig:Motivation_CI}\includegraphics[width=40mm]{}}
    \caption{DI of the trained model varies considerably over repetitions (grey), even after the mean level has stabilized (green zone). The bottom left plot magnifies the green zone. The red dashed lines mark the $95\%$ CI throughout training. The blue dashed lines mark the $95\%$ CI after training. The stable mean DI is $0.025$ whereas the $97.5\%$ quantile after training is $0.06$ which is $135\%$ more!}
\end{figure}
\fi
\vspace{-0.1in}
To see this, consider the synthetic dataset presented in \cite{zafar2019fairness} vulnerable to Disparate Impact (DI), i.e., a trained classifier which is unaware of the unfairness, has different probability of predicting positive class for the two groups of sensitive attribute. When the classifier is trained with an online SGD-type algorithm operating on $\iid$ data stream, subject to the fairness constraints as discussed in \cite{zafar2019fairness}, the mean value of DI becomes significantly small~0.025 on test data. But DI of the learned model varies significantly over $200$ repetitions even after keeping the test data and the initialization of the optimization algorithm fixed (see Figure~\ref{fig:motivation}). This is due to the stochastic nature of the optimization algorithm. Motivated by this, we study the following problem in this paper. 
\vspace{-0.05in}
\begin{quote}
\textit{How to quantify the uncertainty present in a fairness-aware linear classifier when trained with an online stochastic optimization algorithm?}
\end{quote}
\vspace{-0.05in}
The history of Uncertainty Quantification (UQ), known as inference in classical statistics, go way back. There has been considerable research in recent years on UQ of machine learning, especially, neural networks \cite{wu2021quantifying,psaros2023uncertainty,caldeira2020deeply,zhang2022explainable,zhan2022uncertainty}. Various methods have been proposed for UQ of neural network parameters under different settings \cite{psaros2023uncertainty}, e.g., Bayesian method, method of Functional Priors and ensemble method. Unlike us, Bayesian method, and method of functional priors do not take optimization into account. The method of ensembles require repeated training whereas we characterize the asymptotic distribution explicitly and to estimate the asymptotic covariance we use only one trajectory of training data. 

\textbf{Problem Setup} We begin by emphasizing that we are not designing a fairness enforcing framework; rather we characterize the uncertainty of such a framework when trained with stochastic algorithm. Specifically, we construct CI for DI and Disparate Mistreatment (DM) \cite{zafar2019fairness} of a linear classifier trained with an online stochastic optimization algorithm.  For DI-aware model we use the constrained optimization framework introduced in \cite{zafar2019fairness}; see Section~\ref{sec:di} for details. As training algorithm, we choose Stochastic Dual-Averaging (SDA) to avoid the requirement of minibatches to estimate the gradient of the objective function. For the DM-aware model, we consider a penalized version of the problem introduced in \cite{zafar2019fairness} in order to transform the optimization with convex-concave constraint set to a strongly-convex problem. This, instead of using heuristic method to solve optimization with convex-concave constraints, enables us to provide theoretical results on the uncertainty. We use vanilla SGD without minibatch to solve this penalized problem.  

\textbf{Our Contributions} First, we show that a Central Limit Theorem (CLT) holds for the model parameter of DI and DM-aware models when trained with SDA and SGD respectively for strongly-convex loss functions. Historically, Cross-Entropy (CE) loss has been the loss function of choice for classification. But recently, it has been observed that squared loss is comparable, and even better than CE loss in terms of classification test accuracy \cite{demirkaya2020exploring,hui2020evaluation}. Here we shed lights on the fairness of the models trained with squared loss and CE loss.

We show that the covariance of the asymptotic distribution depends on the global optima of the fairness-aware optimization which is unknown beforehand. So the asymptotic covariance needs to be estimated in an online fashion to construct CIs for DI and DM. To do so in an online manner, we propose an online bootstrap algorithm. We theoretically show that the proposed bootstrap estimator converges in distribution to the asymptotic distribution of the model parameter. This result may be of independent interest for covariance estimation of constrained stochastic optimization literature. Moreover, one can construct CI of the model parameter from these bootstrap samples to identify the significant features for fairness-aware classification leading to better \textit{interpretability}. To the best of our knowledge, this is the first work on online UQ for group fairness measures DI and DM through the lens of optimization. 

\textbf{The summary of novelties of this work is given below.}
\vspace{-0.1 in}
\begin{enumerate}[label=(\alph*)] 
    \item We show that the estimated model parameter of a fairness-aware linear classifier, trained with SGD-type algorithm under strongly-convex loss, follows a CLT (Theorem~\ref{th:duchiclt}, and Theorem~\ref{th:dmclt}).
    \item We propose an online bootstrap algorithm to construct asymptotically valid CI for DI and DM of a trained model (Algorithm~\ref{alg:boot}).
    \item We theoretically show that the online bootstrap estimator converges in distribution to the asymptotic distribution of the model parameter (Theorem~\ref{th:bootdistconv}). This result is novel in the context of online covariance estimation for constrained stochastic optimization as well. 
\end{enumerate}
\vspace{-0.1in}
In Section~\ref{eq:relwork} we present related literature. In Section~\ref{sec:notprob}, we introduce the notations and the problem statement. We present the results on asymptotic distribution of the model parameters in Section~\ref{sec:asymp}. We show our results on online bootstrap-based covariance estimation in Section~\ref{sec:onlinecov}. We illustrate the validity of our theoretical results in Section~\ref{sec:results} on real and synthetic dataset. Section~\ref{sec:disc} concludes the paper.
\vspace{-0.25in}
\section{Related Work}\label{eq:relwork}
% Papers designing algo for fairness but not showing UQ
%\cite{wu2018fairness} shows convexity but heuristic.
%\cite{del2020review} mathematical framework
%\cite{zafar2019fairness,komiyama2017two,celis2019classification}
\textbf{UQ in machine learning:} Recently UQ in machine learning has gained significant attention; see \cite{abdar2021review,psaros2023uncertainty} for a few and by no means exhaustive list of significant works on this topic. 

\textbf{UQ in optimization} Asymptotic properties of stochastic approximation algorithms have been studied in depth in the phenomenons works \cite{kushner2003stochastic,benveniste2012adaptive}. In the seminal paper \cite{polyak1992acceleration} it is shown that the averaged iterate of SGD follows a CLT and achieves optimal asymptotic variance. \cite{chen2016statistical, zhu2021online}, and \cite{fang2018online} propose online batch-means estimator and online bootstrap-based approaches respectively for online estimation of the asymptotic covariance for SGD. 

\textbf{UQ in fairness} 
The literature gets really scarce here. To the best of our knowledge, the only work studying similar problem is \cite{maity2021statistical}. \cite{maity2021statistical} develops asymptotically valid CI for individual unfairness. They use gradient-flow based adversarial attack to generate a test statistic to measure individual unfairness and show asymptotic normality of this test statistic. Our work differs from \cite{maity2021statistical} in the following ways:
\begin{enumerate*}
\item we focus on group fairness measures like DI and DM instead of individual fairness,
\item we capture the uncertainty resulting from the use of stochastic training algorithm as opposed to uncertainty under adversarial attack on features,
\item Note that the asymptotic validity of the CIs provided in \cite{maity2021statistical} depends on the fact that the test statistic is sum of function of $\iid$ variables. Whereas in this work the SGD-type algorithms inherently induces a Markov chain, even with $\iid$ data. Online CI estimation from a single-trajectory of a Markov chain requires a more involved analysis. Unlike \cite{maity2021statistical}, sample-covariance cannot be used as the covariance estimator here (see \cite{zhu2021online,fang2018online} for details). 
\end{enumerate*} 
%\cite{singh2021fairness} proposes an approximate fairness measure for ranking under uncertainty and the algorithm to compute it. Note that we address a completely different problem in this work - instead of designing fairness enforcing algorithm under uncertainty, we quantify the uncertainty introduced by the stochastic training algorithm in the fairness parameter of the learned model itself.
\vspace{-0.1in}
\section{Notations and Problem}\label{sec:notprob}
\iffalse
\begin{algorithm}[h]
	\caption{Projected Gradient Descent Method (\texttt{PGD}) for fairness-aware logistic regression} \label{alg:PGD}
	\textbf{Input:} $\eta_t$, $\theta_0\in \Theta$
	\begin{algorithmic}[1]
		\State \textbf{for} $t=0,1,\cdots,T-1$ \textbf{do}
		\State \textbf{Update} $\theta_{t+1}=\Pi_\Theta\left(\theta_t+\eta_t\nabla\left(\sum_{i=1}^N \log p(y_i|x_i,z_i)\right)\right)$ 
		\State \textbf{end for}
	\end{algorithmic}	
	\text{where $\Pi_\Theta$ is the projection operator on to the set $\Theta$.}
\end{algorithm}
\fi
Let $h(\cdot;\theta):\mathbb{R}^d \to\{-1,1\}$ be a classifier parameterized by $\theta\in\mathbb{R}^{d_\theta}$. Given a sample $(x,z,y)$, where $x\in\mathbb{R}^d$ is the feature, $z\in\{0,1\}$ is the sensitive attribute, and $y\in\{-1,1\}$ is the label. We will use $\rho(w)$ to denote the density of a random variable $W$. Let $L(\cdot,W):\Theta\to\mathbb{R}^+\cup \{0\}$ be the loss function where $W\coloneqq(x,y)$, and $l(\theta)\coloneqq\expec{L(\theta,W)}{}$. Let $\delta_\theta(x)$ denote the decision boundary,i.e., $h(x;\theta)=\sign(\delta_{\theta}(x))$. As mentioned above, in this work we use DI and DM as measures of fairness. Let $W_2(X,Y)$ denote the Wasserstein-2 distance between random variables $X$ and $Y$. For some statements we use $DI[DM]$ as subscript to imply that the statement holds for DI and DM both.
\begin{definition}[Disparate Impact]
A binary classifier is said to be fair with respect to DI if the probability of the classifier predicting positive class does not change conditional on the value of the sensitive feature, i.e.,
\begin{align*}
    P(h(x;\theta)=1|z=0)= P(h(x;\theta)=1|z=1).
\end{align*}
We use the following quantity to measure DI,
\begin{align*}
    \phidi(\theta)=&\lvert P(h(x;\theta)=1|z=0)-P(h(x;\theta)=1|z=1)\rvert.\numberthis\label{eq:phididef}
\end{align*}
\end{definition}
\vspace{-0.1in}
A classifier does not suffer from DM if the misclassification rate for the two sensitive groups are same. DM can be defined in terms of False Positive Rate (FPR), False Negative Rate (FNR), False Omission Rate (FOR), and False Discovery Rate (FDR). Although our methods can be applied to any of these choices, in this work we show the results corresponding to DM with respect to FPR.
\vspace{-0.05in}
\begin{definition}[Disparate Mistreatment]
A binary classifier is said to be fair with respect to DM if the False Positive Rate of the classifier does not change conditional on the value of the sensitive feature, i.e.,
\begin{align*}
    P(h(x;\theta)=1|y=-1,z=0)= P(h(x;\theta)=1|y=-1,z=1).
\end{align*}
We use the following quantity to measure DM unfairness
\begin{align*}
    \phidm(\theta)
    =\left| P(h(x;\theta)=1|y=-1,z=0)-
    P(h(x;\theta)=1|y=-1,z=1)\right|. \numberthis\label{eq:phidmdef}
\end{align*}
\end{definition}
\vspace{-0.15in}
\section{Asymptotics of Fairness-aware Classification}\label{sec:asymp}
In this section we show that when a fairness-aware linear classifier for binary classification is trained with SGD-type algorithms, the algorithmic estimate $\hat{\theta}$ of the classifier parameter is asymptotically normal. The asymptotic covariance depends on the Hessian of the loss at the global optima, fairness constraints, and the data distribution. Note that $\phidi(\hat{\theta})$, and $\phidm(\hat{\theta})$ both depend on the data distribution as well as $\rho(\hat{\theta})$. Among these two sources of uncertainty, only $\rho(\hat{\theta})$ depends on the stochastic optimization algorithm. So, for a practitioner, it is important to understand the uncertainty arising due to the choice of the algorithm. 
\begin{assumption}\label{as:strongcon}
The loss function $L(\cdot;W):\Theta\to\mathbb{R}^+\cup \{0\}$ is a non-decreasing $\mu$-strongly convex function for all $W\in\mathbb{R}^{d+1}$.
\vspace{-0.15in}
\end{assumption}
\begin{assumption}\label{as:lipgrad}
The loss function $L(\theta;W)$ has Lipschitz continuous gradient, i.e., for all $\theta_1,\theta_2\in\Theta$ we have,
\begin{align*}
    \norm{\nabla l(\theta_1)-\nabla l(\theta_2)}_2\leq L_G\norm{\theta_1-\theta_2}_2.
\end{align*}
where $L_G>0$ is a constant. There exists a $L_H,\delta>0$ such that, for $\theta\in \Theta\cap\{\theta:\norm{\theta-\theta^*}_2\leq \delta\}$, we have
\begin{align*}
    \norm{\nabla l(\theta)-\nabla l(\theta^*)-\nabla^2 l(\theta)(\theta-\theta^*)}_2\leq L_H\norm{\theta-\theta^*}_2^2.
\end{align*}
\end{assumption}
\begin{assumption}\label{as:noisevar}
There is a constant $L_E$, such that, for all $\theta\in\Theta^*$, and $W\sim\mathcal{D}$,
\begin{align*}
    \expec{\norm{\nabla L(\theta,W)-\nabla L(\theta^*,W)}_2^2}{}\leq L_E\norm{\theta-\theta^*}_2^2.
\end{align*}
\end{assumption}
\begin{assumption}
    The data sequence $\{(W_k,z_k)\}_k$ is $\iid$, and $\expec{L(\theta,W|\theta)}{}=l(\theta)$, i.e., the noise in the gradient is unbiased. We also assume the following on the growth of the function gradient,
    \begin{align*}
        \norm{l(\theta)}_2^2\leq C(1+\norm{\theta}_2^2),
    \end{align*}
    for some constant $C>0$.
\end{assumption}
\begin{assumption}\label{as:asymvar}
    \begin{align*}
        \underset{k\to\infty}{\lim}Cov\left[\nabla L(\theta^*,W_k)\right]=\Sigma.\numberthis\label{eq:asymvar}
    \end{align*}
\end{assumption}
\begin{assumption}\label{as:lincon}
For any point $x\in\mathbb{R}^d$, the decision boundary $\delta_\theta(x)$ is a linear function of $\theta$, i.e., for some feature map $\varphi(x)$ we can write, $\delta_\theta(x)=\theta^\top\varphi(x)$.
\end{assumption}
Assumption~\ref{as:strongcon}, and Assumption~\ref{as:lipgrad} hold for classical loss functions used in classification including logistic loss and squared loss. Assumption~\ref{as:noisevar} implies that when $\theta$ is close to the optimal value $\theta^*$, the variance of the noise reduces proportionally to $\norm{\theta-\theta^*}_2^2$ \cite{polyak1992acceleration,duchi2016asymptotic,fang2018online}. It is easy to see that this holds true for logistic regression and Support Vector Machine (SVM). Assumption~\ref{as:lincon} holds true for logistic regression, SVM as well as in the Neural Tangent Kernel regime of deep neural network classifier. 
\vspace{-0.07in}
\subsection{Disparate Impact}\label{sec:di}
We adopt the DI-aware classification framework similar to the one presented in \cite{zafar2019fairness} given by,
\begin{align*}
    \min& \quad l(\theta)\coloneqq\expec{L(\theta;W)}{}\\
    \text{subject to}&\quad \abs{\expec{  (z-\expec{z}{})\delta_{\theta}(x)}{}}\leq \epsilon. \numberthis\label{eq:mainprob}
\end{align*}
where $\delta_\theta(x)$ is the decision boundary, and $\epsilon>0$ is the unfairness tolerance level. Our setting differs from \cite{zafar2019fairness} in the following two aspects:
\begin{enumerate*}
    \item We assume an online setting, i.e., the training data is not available at once but arrive sequentially one sample at a time. This is especially efficient when the training dataset is large and computing the gradient of the loss function on the whole dataset is expensive.
    \item We assume access to a fixed small dataset $\mathcal{D}'$ with $n_c=|\mathcal{D}'|$ samples $\iid$ as the training data. We use this dataset to impose the fairness constraints on the classification problem. 
\end{enumerate*} 
Along with Assumption~\ref{as:lincon} the above setup implies that the problem \eqref{eq:mainprob} now becomes,
\vspace{-0.05in}
\begin{align*}
    \min& \quad l(\theta)\\
    \text{subject to}&\quad \theta\in\Theta\coloneqq\left\lbrace \theta|\abs{  \tilde{x}^\top\theta}\leq \epsilon\right\rbrace, \numberthis\label{eq:mainprobsample}
\end{align*}
where $\tilde{x}=n_c^{-1}\sum_{i=1}^{n_c}(z_i-\bar{z})x_i$, and $\bar{z}$ is the sample average of $z_i\in\mathcal{D}'$.
We use the Stochastic Dual Averaging (SDA) algorithm ( Algorithm~\ref{alg:DA}) to solve \eqref{eq:mainprobsample}. In this algorithm one maintains a sequence of updates $\{z_k\}_k$ which is a weighted sum of all the noisy gradients up to time $k$. $\theta_k$ is set equal to the projection $\Pi_\Theta(-z_{k-1})$ of $-z_{k-1}$ on to the set $\Theta$. Since $\Theta$ is a space confined by two hyperplanes $\Pi_\Theta(\cdot)$ can be written in closed form given by,
\begin{align*}
    \Pi_\Theta(\theta)=\begin{cases}
    \theta, & \text{if } \abs{\tilde{x}^\top\theta} \leq \epsilon \\
    \theta-\frac{\tilde{x}^\top \theta-\epsilon}{\|\tilde{x}\|_2^2}\tilde{x}, & \text{if } \tilde{x}^\top\theta \geq \epsilon\\
    \theta-\frac{\tilde{x}^\top \theta+\epsilon}{\|\tilde{x}\|_2^2}\tilde{x}, & \text{if } \tilde{x}^\top\theta \leq -\epsilon
  \end{cases}
\end{align*}
\vspace{-0.05in}
\begin{algorithm}[t]
	\caption{Weighted Stochastic Dual Averaging} \label{alg:DA}
	\textbf{Input:} $\eta_k$, $z_0\in \mathbb{R}^d$
	\begin{algorithmic}[1]
		\STATE \textbf{for} $k=1,\cdots,T$ \textbf{do}
		\STATE \textbf{Update} $\theta_{k}=\argmin_{\theta\in\Theta}\left\lbrace\left\langle z_{k-1},\theta\right\rangle+\frac{1}{2}\norm{\theta}_2^2\right\rbrace=\Pi_{\Theta}(-z_{k-1})$ 
		\STATE $z_k=z_{k-1}+\eta_k\nabla L(\theta_k,W_k)$ 
		\STATE \textbf{end for}
	\end{algorithmic}	
	\text{where $\Pi_\Theta$ is the projection operator on to the set $\Theta$.}\\
 \textbf{Output:} $\bar{\theta}_T=\frac1T\sum_{k=1}^T\theta_k\in \mathbb{R}^d$.
\end{algorithm}
\vspace{-0.1in}
\begin{theorem}\label{th:duchiclt}
Let Assumption~\ref{as:strongcon}-\ref{as:lincon} be true and $\eta_k=k^{-a}$ where $a\in (1/2,1)$. Then, for the updates of Algorithm~\ref{alg:DA}, we have, $\bar{\theta}_n\overset{a.s.}{\to}\theta_{DI}^*$, and 
\begin{align*}
    {\sqrt{n}}(\bar{\theta}_n-\theta_{DI}^*)\sim^dN\left(0,\Sigma_{DI}\right),
\end{align*}
where $\theta_{DI}^*$ is the optima of \eqref{eq:mainprobsample}, $P_A=I-\tilde{x}\tilde{x}^\top/\norm{\tilde{x}}_2^2$, $\Sigma_{DI}=P_A\left(\nabla^2 l(\theta^*)\right)^\dagger P_A\Sigma P_A\left(\nabla^2 l(\theta^*)\right)^\dagger P_A$. 
\end{theorem}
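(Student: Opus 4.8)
The plan is to follow the stochastic-approximation / dual-averaging template of Polyak--Juditsky and Duchi--Ruan, specialized to the slab constraint $\Theta=\{\theta:\abs{\tilde{x}^\top\theta}\le\epsilon\}$. First I would establish the almost-sure convergence $\bar{\theta}_n\overset{a.s.}{\to}\theta_{DI}^*$. Strong convexity of $l$ (Assumption~\ref{as:strongcon}) makes $\theta_{DI}^*$ the unique minimizer of \eqref{eq:mainprobsample}, and the schedule $\eta_k=k^{-a}$ with $a\in(1/2,1)$ satisfies $\sum_k\eta_k=\infty$, $\sum_k\eta_k^2<\infty$; a standard Robbins--Siegmund supermartingale argument applied to the dual-averaging recursion then gives $\theta_k\to\theta_{DI}^*$ a.s., and averaging preserves the limit.

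For the CLT, the work is local near $\theta_{DI}^*$. I would first record the KKT conditions: either the constraint is inactive, in which case the analysis collapses to the classical unconstrained Polyak--Juditsky result, or a single hyperplane $\tilde{x}^\top\theta=\pm\epsilon$ is active and $\nabla l(\theta_{DI}^*)$ is normal to it, i.e.\ $P_A\nabla l(\theta_{DI}^*)=0$. In the active case the closed-form projection gives a clean identity: for $\theta$ on the active face, $\Pi_\Theta(\theta)=P_A\theta+c\tilde{x}$, so that $P_A\theta_k=-P_A z_{k-1}$. Thus the \emph{tangent} component of the iterate is exactly a dual average of the tangent-projected stochastic gradients $P_A\nabla L(\theta_j,W_j)$. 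Moreover, since both $\theta_k$ and $\theta_{DI}^*$ lie on the hyperplane, $\tilde{x}^\top(\theta_k-\theta_{DI}^*)=0$, so once localized the error $\Delta_k\coloneqq\theta_k-\theta_{DI}^*$ is purely tangential ($\Delta_k=P_A\Delta_k$).

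Linearizing via Assumption~\ref{as:lipgrad} (writing $\xi_k=\nabla L(\theta_k,W_k)-\nabla l(\theta_k)$ and using $P_A\nabla l(\theta_j)\approx P_A\nabla^2 l(\theta_{DI}^*)\Delta_j$), differencing the tangent-space dual average yields the reduced recursion
\[
\Delta_{k+1}=\Delta_k-\eta_k\,P_A\nabla^2 l(\theta_{DI}^*)P_A\,\Delta_k-\eta_k\,P_A\xi_k+r_k,
\]
an effectively unconstrained strongly convex stochastic approximation on the tangent space with curvature $\bar{H}\coloneqq P_A\nabla^2 l(\theta_{DI}^*)P_A$ and noise $P_A\xi_k$. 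The Polyak--Ruppert averaging identity then gives $\sqrt{n}\,\bar{\Delta}_n=-\bar{H}^{\dagger}n^{-1/2}\sum_{k=1}^nP_A\xi_k+o_P(1)$, where the remainder $r_k$ is controlled by the second-order smoothness in Assumption~\ref{as:lipgrad} and, crucially, by Assumption~\ref{as:noisevar}, which forces the noise variance to shrink like $\norm{\theta_k-\theta_{DI}^*}_2^2$ and kills the cross terms. A martingale CLT — the limiting conditional covariance being $\Sigma$ by Assumption~\ref{as:asymvar} and the Lindeberg condition supplied by Assumption~\ref{as:noisevar} — gives $n^{-1/2}\sum_kP_A\xi_k\overset{d}{\to}N(0,P_A\Sigma P_A)$. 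Combining produces the projected sandwich $\bar{H}^{\dagger}(P_A\Sigma P_A)\bar{H}^{\dagger}$, i.e.\ the claimed $\Sigma_{DI}$ in its $P_A(\nabla^2 l(\theta^*))^\dagger P_A\,\Sigma\,P_A(\nabla^2 l(\theta^*))^\dagger P_A$ form.

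The main obstacle is the nonsmoothness of $\Pi_\Theta$ where the active set can switch, together with the associated active-set identification: I must show that with probability tending to one the iterates (at the averaging scale) remain on a single active face or in the interior, so that the Jacobian of the projection is genuinely $P_A$ and the above linearization is valid. This requires quantifying how fast the normal component is pinned to the boundary — specifically establishing $\tilde{x}^\top\bar{\Delta}_n=o_P(n^{-1/2})$ so that it contributes nothing to the limit — relative to the $n^{-1/2}$ fluctuation scale, and handling the long memory of dual averaging, where $z_k$ accumulates all past noisy gradients, so that the accumulated tangential error still satisfies the martingale CLT. Establishing these localization and rate estimates rigorously, rather than the algebra of the covariance sandwich, is where the real difficulty lies.
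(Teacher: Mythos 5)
Your outline is correct and, in substance, follows the same route as the paper: the paper's proof of Theorem~\ref{th:duchiclt} is a verification-plus-citation argument built on \cite{duchi2016asymptotic}, and what you propose is a from-scratch reconstruction of what those cited results do internally. Concretely, the paper (i) splits the slab into the two halfspace constraints $\tilde{x}^\top\theta\le\epsilon$ and $(-\tilde{x})^\top\theta\le\epsilon$ and notes only one can be active at $\theta_{DI}^*$, (ii) uses the KKT condition $\nabla l(\theta_{DI}^*)+\lambda^*A=0$ together with linearity of the constraint and strong convexity to verify a second-order growth condition (Assumption C of \cite{duchi2016asymptotic}) --- the analogue of your observation that the tangent-space curvature $P_A\nabla^2 l(\theta_{DI}^*)P_A$ is positive definite on the range of $P_A$, (iii) invokes Theorem 3 of \cite{duchi2016asymptotic} (Lemma~\ref{lm:constrid} in the paper) for active-set identification, and (iv) invokes Theorem 4 of \cite{duchi2016asymptotic} for the CLT with the stated sandwich covariance. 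Your tangent-space reduction ($P_A\theta_k=-P_Az_{k-1}$ on the active face, so dual averaging becomes exact SGD on the tangent space, after which the Polyak--Ruppert identity and a martingale CLT give $\bar{H}^\dagger P_A\Sigma P_A\bar{H}^\dagger$ with $\bar{H}=P_A\nabla^2 l(\theta_{DI}^*)P_A$) is precisely the mechanism inside those cited results, and indeed it is the same decomposition the paper itself carries out explicitly when proving the bootstrap version (Theorem~\ref{th:bootdistconv}).

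The one place your plan falls short of a complete proof is exactly the step you defer as ``where the real difficulty lies.'' Note, however, that you are aiming at a weaker and more awkward statement than what is actually available: you plan to show localization to a single face with probability tending to one and a rate $\tilde{x}^\top\bar{\Delta}_n=o_P(n^{-1/2})$, whereas the identification result used by the paper is stronger and cleaner --- almost surely there exists a finite (random) $K$ such that $A\theta_k=\epsilon$ for all $k\ge K$, so the normal component of $\theta_k-\theta_{DI}^*$ is eventually \emph{exactly} zero and the first $K$ terms contribute only $O_P(K/\sqrt{n})=o_P(1)$ to $\sqrt{n}\,\bar{\Delta}_n$; no delicate rate estimate is needed, and the ``long memory'' of dual averaging disappears because the tangential recursion telescopes exactly after time $K$. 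If you intend your argument to be self-contained, proving that finite-time identification lemma is the genuine outstanding work (it is nontrivial and is the content of Theorem 3 of \cite{duchi2016asymptotic}); if you are willing to cite it, your argument closes and coincides with the paper's.
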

\vspace{-0.05in}
The proof of Theorem~\ref{th:duchiclt} is provided in Appendix~\ref{pf:duchiclt}.
\begin{figure*}[t] 
    \centering
    \subfigure[]{\label{fig:W2_logistic_synthetic}\includegraphics[width=40mm,height=35.5mm]{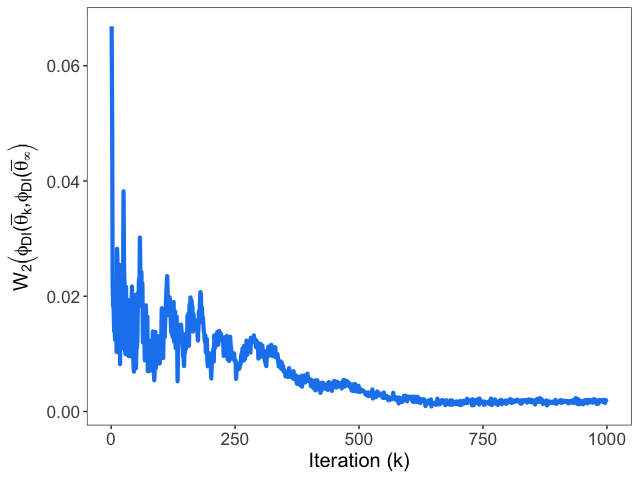}}
    \subfigure[]{\label{fig:CI_logistic_synthetic}\includegraphics[width=40mm,height=35.5mm]{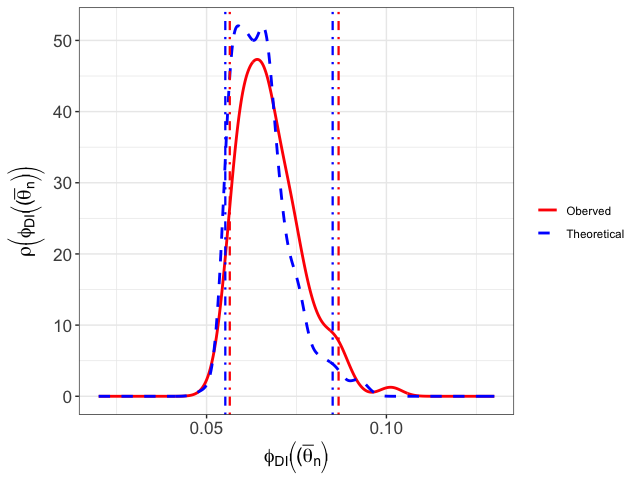}}
    \subfigure[]{\label{fig:boot_coverage_logistic_synth}\includegraphics[width=40mm]{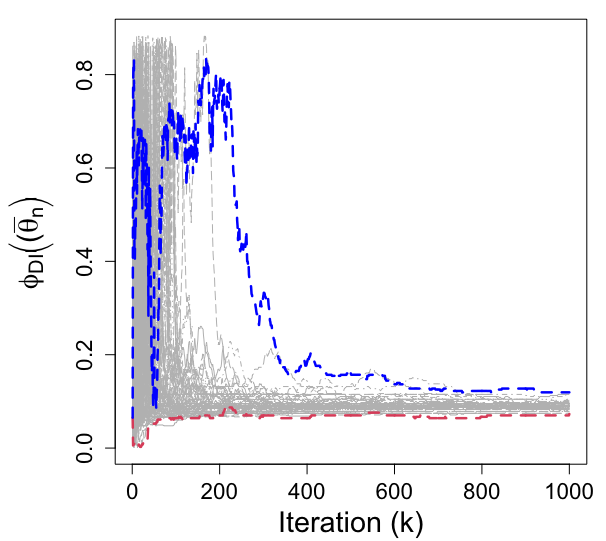}}
    \subfigure[]{\label{fig:MIS_logistic_synth}\includegraphics[width=40mm]{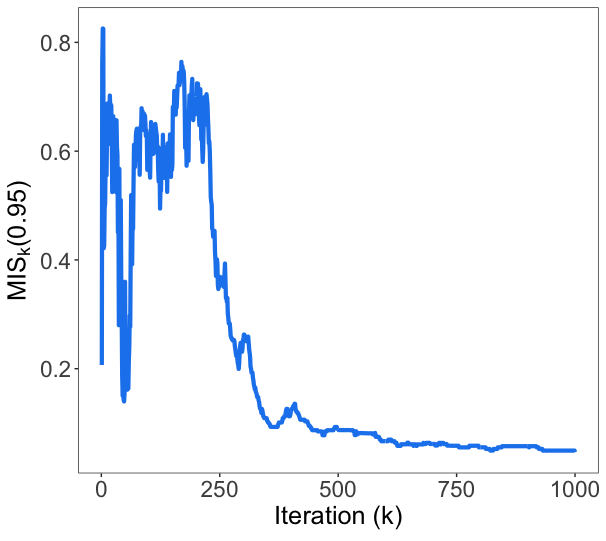}}
    \caption{(a) Convergence of $W_2\left(\phi_{DI}(\bar{\theta}_k),\phi_{DI}(\bar{\theta}_\infty)\right)$ (b) comparison of theoretical asymptotic density and observed density of $\phi_{DI}(\bar{\theta}_k)$ (c) Online Bootstrap 95\% CI (d) $MIS(\hat\phi_{DI,0.025},\hat\phi_{DI,0.975};0.95)$ under CE loss for synthetic data at risk of DI.}\label{fig:logistic_synth_DI}
    \end{figure*}
\subsection{Disparate Mistreatment}\label{sec:DM}
\iffalse
\begin{algorithm}[t]
	\caption{Stochastic Gradient Descent} \label{alg:SGD}
	\textbf{Input:} $\eta_k$, $\theta_0\in \mathbb{R}^d$
	\begin{algorithmic}[1]
        %\STATE {\bfseries Input:} $\eta_k$, $\theta_0\in \mathbb{R}^d$
		\STATE \textbf{for} $k=1,\cdots,T$ \textbf{do}
		\STATE \textbf{Update} \begin{align*}
		    \theta_{k}=\theta_{k-1}-\eta \left(\nabla L(\theta_{k-1},w_k) +R_2\nabla\gamma(\theta_{k-1};{\mathcal{D}'}^-)\right) 
		\end{align*} 
		\STATE end for
	\end{algorithmic}	
 \textbf{Output:} $\bar{\theta}_T=\frac1T\sum_{k=1}^T\theta_k$
\end{algorithm}
\begin{algorithm}[tb]
   \caption{Bubble Sort}
   \label{alg:example}
\begin{algorithmic}
   \STATE {\bfseries Input:} data $x_i$, size $m$
   \REPEAT
   \STATE Initialize $noChange = true$.
   \FOR{$i=1$ {\bfseries to} $m-1$}
   \IF{$x_i > x_{i+1}$}
   \STATE Swap $x_i$ and $x_{i+1}$
   \STATE $noChange = false$
   \ENDIF
   \ENDFOR
   \UNTIL{$noChange$ is $true$}
\end{algorithmic}
\end{algorithm}
\fi
The constrained formulation, similar to \eqref{eq:mainprobsample}, of linear classifier designed to counter unfairness due to DM with respect to FPR, is as follows \cite{zafar2019fairness},
\begin{align*}
    &\min_{\theta\in\Theta} \quad l(\theta), \quad\text{where}\\
    &\Theta\coloneqq\left\lbrace \theta|\abs{  \frac{1}{n_c^-}\sum_{i\in {\mathcal{D}'}^-}(z_i-\bar{z})\min(0,y_ix_i^\top\theta)}\leq \epsilon\right\rbrace, \numberthis\label{eq:mainprobsampledm}  
\end{align*}
and ${\mathcal{D}'}^-$ contains the samples from $\mathcal{D}'$ with label $-1$, and $n_c^-=|{\mathcal{D}'}^-|$. Note that now we have a convex-concave constraint set \cite{zafar2019fairness}. In general, finding the global minima of an optimization problem with nonconvex constraints is an algorithmically hard problem. So existing literature often concentrate on finding approximate minima \cite{eichfelder2021nonconvex}, stationary points \cite{lin2019inexact}, and local minima \cite{boob2022stochastic}. In a nonconvex landscape, presence of multiple (often infinite) stationary points and/or local minima imply that a CLT type result is not well-defined. But we can exploit the piecewise-linear nature of our decision boundary to circumvent this problem. Instead of the constrained formulation we propose the following penalty-based approach.
\begin{align*}
    \min& \quad l(\theta)+ R_2\gamma(\theta;{\mathcal{D}'}^-), \quad \text{where,} \numberthis\label{eq:mainprobsampledmuncon}  
\end{align*}
$$
\gamma(\theta;{\mathcal{D}'}^-)\coloneqq({(n_c^-)}^{-1}\sum_{i\in{\mathcal{D}'}^-}(z_i-\bar{z})\min(0,y_ix_i^\top\theta))^2.
$$
Similar approaches have been proposed in \cite{wan2021modeling,celis2019classification}. We assume that $R_2$ is user-provided similar to $\epsilon$ in DI formulation. Our results hold for any $R_2\geq 0$. The points of discontinuity of $\min(0,y_ix_i^\top\theta)$ are given by $y_ix_i^\top\theta=0$. If the distribution of $x_i$ does not contain a point mass, the hessian of $\min(0,y_ix_i^\top\theta)$ is $0$ almost everywhere. We can bypass this issue by defining the Hessian and the gradient at $y_ix_i^\top\theta=0$ to be $0$ \cite{blanc2020implicit}. Another way of bypassing this issue would be to replace the function $\min(0,y_ix_i^\top\theta)$ by a differentiable function $-\log(1+\exp(-\tau y_ix_i^\top\theta))/\tau$, $\tau>0$ which can approximate $\min(0,y_ix_i^\top\theta)$ arbitrarily close as $\tau\to\infty$. It is easy to see that the spectral norm of the Hessian of $-\log(1+\exp(-\tau y_ix_i^\top\theta))/\tau$ is proportional to $\tau\exp(-\tau y_ix_i^\top\theta)/(1+\exp(-\tau y_ix_i^\top\theta))^2\to 0$ as $\tau\to\infty$. Then the objective function in \eqref{eq:mainprobsampledm} is strongly convex. So, to solve \eqref{eq:mainprobsampledm}, we use vanilla SGD with Polyak-Ruppert averaging \cite{polyak1992acceleration} given by:
\begin{align*}
		    &\theta_{k}=\theta_{k-1}-\eta \left(\nabla L(\theta_{k-1},w_k) +R_2\nabla\gamma(\theta_{k-1};{\mathcal{D}'}^-)\right) \\
      &\bar{\theta}_T=\frac1T\sum_{k=1}^T\theta_k\numberthis\label{eq:sgd}
\end{align*} 
Polyak-Ruppert averaging provably achieves optimal asymptotic variance under our assumptions \cite{ruppert1988efficient,polyak1992acceleration,tripuraneni2018averaging}. Then we have the following result on asymptotic distribution of $\bar{\theta}_T$. 
\begin{theorem}\label{th:dmclt}
    Let Assumptions~\ref{as:strongcon}-\ref{as:lincon} be true, and $H_{DM}\coloneqq\nabla^2 l(\theta_{DM}^*)$. Then, choosing $\eta_k=k^{-a}$ with $1/2<a<1$, for the updates of SGD \eqref{eq:sgd} we have, $\bar{\theta}_n\overset{a.s.}{\to}\theta_{DM}^*$, and 
    \begin{align*}
        \sqrt{k}\left(\bar{\theta}_k-\theta_{DM}^*\right)\sim N(0, \Sigma_{DM} )
    \end{align*}
    where $\theta_{DM}^*$ is the global optima of \eqref{eq:mainprobsampledm}, and $\Sigma_{DM}=H_{DM}^{-1}\left(\Sigma+R_2^2\nabla \gamma(\theta;{\mathcal{D}'^-})\nabla\gamma(\theta;{\mathcal{D}'^-})^\top\right)H_{DM}^{-1}$.
\end{theorem}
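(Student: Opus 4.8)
\emph{Proof plan.} Unlike the constrained DI problem of Theorem~\ref{th:duchiclt}, the DM formulation \eqref{eq:mainprobsampledmuncon} is an \emph{unconstrained} minimization of the composite objective $F(\theta)\coloneqq l(\theta)+R_2\gamma(\theta;{\mathcal{D}'}^-)$, so there is no projection to analyze and the statement reduces to the Polyak--Juditsky central limit theorem for Polyak--Ruppert averaged SGD \cite{polyak1992acceleration,ruppert1988efficient}. The plan is first to pin down the curvature at the optimum: as argued after \eqref{eq:mainprobsampledmuncon}, under the convention that the Hessian of each kinked map $\min(0,y_ix_i^\top\theta)$ is set to zero (or via the $-\tau^{-1}\log(1+\exp(-\tau\,\cdot))$ smoothing whose Hessian norm vanishes as $\tau\to\infty$), the curvature driving the limit is taken to be $H_{DM}=\nabla^2 l(\theta_{DM}^*)$, which is positive definite by Assumption~\ref{as:strongcon}. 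I would then check that $F$ inherits strong convexity and the Lipschitz-gradient and quadratic-remainder bounds of Assumption~\ref{as:lipgrad} on a neighbourhood of $\theta_{DM}^*$, so that the linearization $\nabla F(\theta)=H_{DM}(\theta-\theta_{DM}^*)+r(\theta)$ holds with $\norm{r(\theta)}_2\le L_H\norm{\theta-\theta_{DM}^*}_2^2$.

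Writing $\Delta_k\coloneqq\theta_k-\theta_{DM}^*$ and decomposing the stochastic gradient as $g_k=\nabla F(\theta_{k-1})+\xi_k$, where $\xi_k$ is the mean-zero $\mathcal{F}_{k-1}$-measurable martingale difference of the gradient estimate, the SGD recursion \eqref{eq:sgd} and the linearization give
\begin{align*}
\Delta_k=(I-\eta_k H_{DM})\Delta_{k-1}-\eta_k\xi_k-\eta_k r(\theta_{k-1}).
\end{align*}
The next step is to establish $\bar\theta_n\overset{a.s.}{\to}\theta_{DM}^*$ together with the $L^2$ rate $\expec{\norm{\Delta_k}_2^2}{}=\order(\eta_k)$; this follows from strong convexity, Assumption~\ref{as:noisevar} (noise variance controlled by $\norm{\Delta_k}_2^2$), and the Robbins--Monro conditions $\sum_k\eta_k=\infty$, $\sum_k\eta_k^2<\infty$, both of which hold for $a\in(1/2,1)$.

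For the CLT I would use the averaging argument. Rearranging the recursion as $H_{DM}\Delta_{k-1}=\eta_k^{-1}(\Delta_{k-1}-\Delta_k)-\xi_k-r(\theta_{k-1})$, summing over $k=1,\dots,n$, and applying $H_{DM}^{-1}$ yields
\begin{align*}
\sqrt{n}\,(\bar\theta_n-\theta_{DM}^*)
&=-H_{DM}^{-1}\frac{1}{\sqrt n}\sum_{k=1}^n\xi_k\\
&\quad+H_{DM}^{-1}\Big(\frac{1}{\sqrt n}\sum_{k=1}^n\frac{\Delta_{k-1}-\Delta_k}{\eta_k}-\frac{1}{\sqrt n}\sum_{k=1}^n r(\theta_{k-1})\Big).
\end{align*}
Both bracketed terms are $o_P(1)$: the telescoping/Abel-summation term is controlled because $\eta_k$ varies slowly ($a<1$) together with the $L^2$ rate, while $\expec{\norm{r(\theta_{k-1})}_2}{}=\order(\eta_{k-1})$ makes its scaled sum $\order(n^{1/2-a})\to0$ for $a>1/2$. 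For the leading martingale term I would invoke the martingale CLT, using Assumption~\ref{as:noisevar} to verify the Lindeberg condition and Assumption~\ref{as:asymvar} (with continuity of the noise covariance and $\Delta_{k-1}\to0$) to show the conditional covariances converge to $\Sigma+R_2^2\nabla\gamma(\theta_{DM}^*;{\mathcal{D}'}^-)\nabla\gamma(\theta_{DM}^*;{\mathcal{D}'}^-)^\top$; Slutsky's theorem then delivers $N(0,\Sigma_{DM})$.

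The hard part will be the rigorous accounting of the penalty $\gamma$, and it enters in two ways. First, because $\gamma$ is only piecewise quadratic, the quadratic-remainder bound of Assumption~\ref{as:lipgrad} for the composite $F$ must be justified under the zero-Hessian convention for $\min(0,\cdot)$, and one must verify that the residual rank-one Hessian of $\gamma$ is negligible near $\theta_{DM}^*$ so that the effective curvature is exactly $H_{DM}=\nabla^2 l(\theta_{DM}^*)$; this needs the law of $x_i$ to place no mass on the kink set $\{y_ix_i^\top\theta=0\}$. Second, and most delicate, is the origin of the rank-one term $R_2^2\nabla\gamma\nabla\gamma^\top$ in $\Sigma_{DM}$: if $\nabla\gamma$ is computed on the full fixed set ${\mathcal{D}'}^-$ it is deterministic given $\theta_{k-1}$ and contributes nothing to the martingale noise, so this term should be understood as the contribution of a stochastic (e.g.\ subsampled) penalty-gradient estimate, and isolating exactly this fluctuation inside $\xi_k$ while separating it from the loss-gradient covariance $\Sigma$ is the bookkeeping that requires the most care.
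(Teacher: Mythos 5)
Your route coincides with the paper's: the paper gives no proof of Theorem~\ref{th:dmclt} at all, stating only that it ``follows readily from Theorem 3 of \cite{polyak1992acceleration}'', and your sketch is precisely the standard proof of that theorem (linearization at $\theta_{DM}^*$, martingale-plus-remainder decomposition, Abel summation for the telescoping term, martingale CLT plus Slutsky). As a generic Polyak--Ruppert argument for the unconstrained composite objective, your execution is sound.

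However, the two issues you flag at the end are not ``bookkeeping that requires care''; they are a genuine obstruction, and your suspicion is correct --- in fact understated. With the update exactly as written in \eqref{eq:sgd}, the penalty gradient $R_2\nabla\gamma(\theta_{k-1};{\mathcal{D}'}^-)$ is a deterministic function of $\theta_{k-1}$, so the martingale difference is $\xi_k=\nabla L(\theta_{k-1},W_k)-\nabla l(\theta_{k-1})$, whose limiting conditional covariance is $\Sigma$ (Assumption~\ref{as:asymvar}) and nothing more: no term $R_2^2\nabla\gamma\nabla\gamma^\top$ can enter the middle matrix. Moreover, writing $\gamma=g^2$ with $g(\theta)\coloneqq (n_c^-)^{-1}\sum_{i\in{\mathcal{D}'}^-}(z_i-\bar z)\min(0,y_ix_i^\top\theta)$, the zero-Hessian convention annihilates only the $2g\nabla^2g$ part of $\nabla^2\gamma=2\nabla g\nabla g^\top+2g\nabla^2 g$; the rank-one part $2\nabla g\nabla g^\top$ survives (it is exactly what makes the composite objective strongly convex in the paper's own discussion), so the mean-field Jacobian at the optimum is $\nabla^2 l(\theta_{DM}^*)+2R_2\nabla g(\theta_{DM}^*)\nabla g(\theta_{DM}^*)^\top\neq H_{DM}$, and this correction is of order $R_2$ (in the paper's experiments $R_2$ is in the hundreds), so your hope that it is negligible fails. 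A correct application of Polyak--Juditsky to \eqref{eq:sgd} therefore yields
\begin{align*}
\sqrt{n}\left(\bar{\theta}_n-\theta_{DM}^*\right)\overset{d}{\to}N\left(0,\left(\nabla^2 l(\theta_{DM}^*)+2R_2\nabla g\nabla g^\top\right)^{-1}\Sigma\left(\nabla^2 l(\theta_{DM}^*)+2R_2\nabla g\nabla g^\top\right)^{-1}\right),
\end{align*}
which differs from the stated $\Sigma_{DM}$ in both the sandwich and the middle. The stated middle matrix has a revealing explanation: first-order optimality gives $\nabla l(\theta_{DM}^*)=-R_2\nabla\gamma(\theta_{DM}^*)$, hence
\begin{align*}
\Sigma+R_2^2\nabla\gamma(\theta_{DM}^*)\nabla\gamma(\theta_{DM}^*)^\top=\expec{\nabla L(\theta_{DM}^*,W)\nabla L(\theta_{DM}^*,W)^\top}{},
\end{align*}
the \emph{uncentered} second moment of the loss gradient alone --- what one obtains by treating $\nabla L(\theta_{DM}^*,W_k)$ as if it were mean-zero noise, ignoring the penalty's role in both the stationarity condition and the curvature. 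Your fallback reading (a stochastically subsampled penalty gradient) is not the algorithm defined in \eqref{eq:sgd}, so it cannot close the gap, and it would also alter the sandwich matrices. In short: your approach is the paper's approach carried out honestly, but completed it proves a CLT with a covariance different from the one claimed; the discrepancy lies in the theorem statement itself, and the paper's one-line citation never confronts it.
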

We omit the proof of this result here since it follows readily from Theorem 3 of \cite{polyak1992acceleration}.
\begin{remark}
One can not construct a CI for the algorithmic estimator $\bar{\theta}_n$ based on Theorem~\ref{th:duchiclt}
and Theorem~\ref{th:dmclt} because $\Sigma_{DI}$, and $\Sigma_{DM}$ depend on the true minimizer which is unknown. So we need to estimate this covariance to use these asymptotic results to construct CI. We address this issue in Section~\ref{sec:onlinecov} which is our main theoretical contribution. 
\end{remark}
\vspace{-0.15in}
\iffalse
\begin{remark}[Choice of $R_2$]
    In \eqref{eq:mainprobsampledm}, we assume that $R_2$ is user-provided similar to $\epsilon$ in DI formulation \cite{zafar2019fairness}. Our results holds for any $R_2>0$ although $R_2$ can be chosen optimally by solving an auxiliary optimization as shown in \cite{celis2019classification}.
\end{remark}
\fi
\begin{figure*}[h] 
    \centering
    \subfigure[]{\label{fig:W2_logistic_adult}\includegraphics[width=40mm,height=35.5mm]{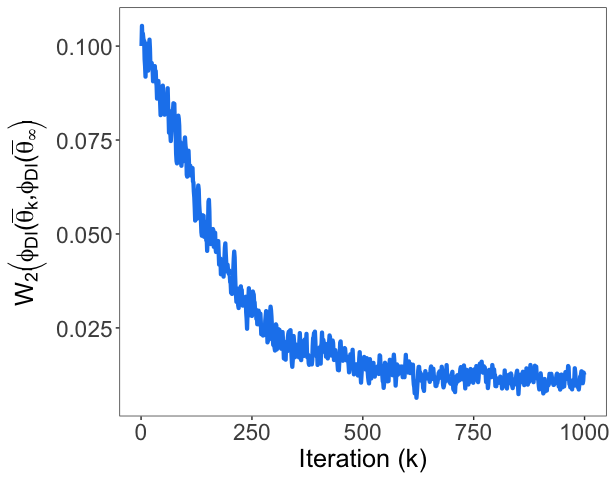}}
    \subfigure[]{\label{fig:CI_logistic_adult}\includegraphics[width=40mm,height=35.5mm]{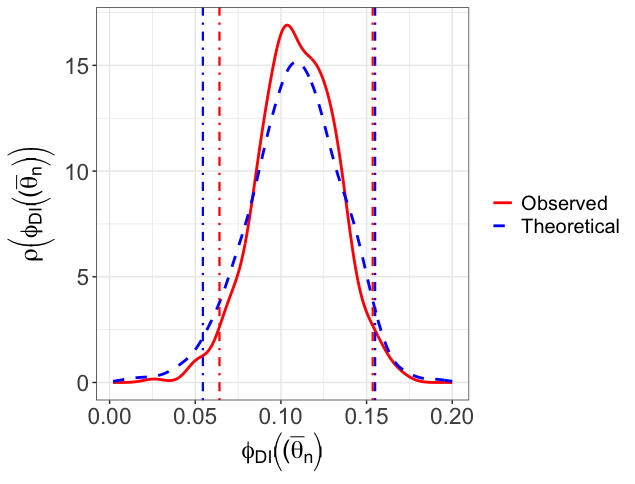}}
     \subfigure[]{\label{fig:boot_coverage_logistic_adult}\includegraphics[width=40mm]{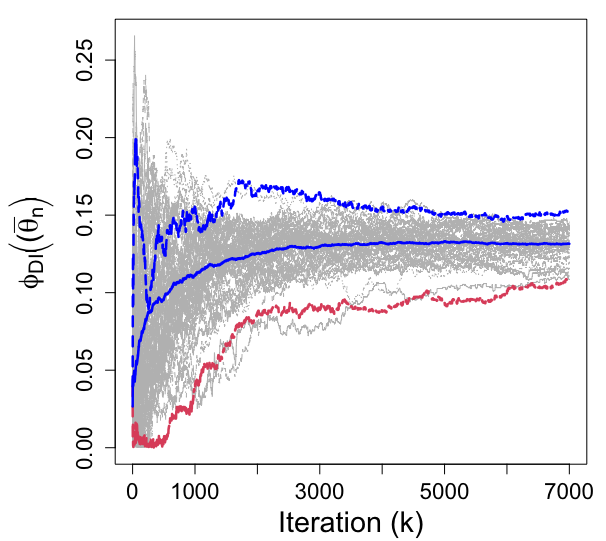}}
    \subfigure[]{\label{fig:MIS_logistic_adult}\includegraphics[width=40mm]{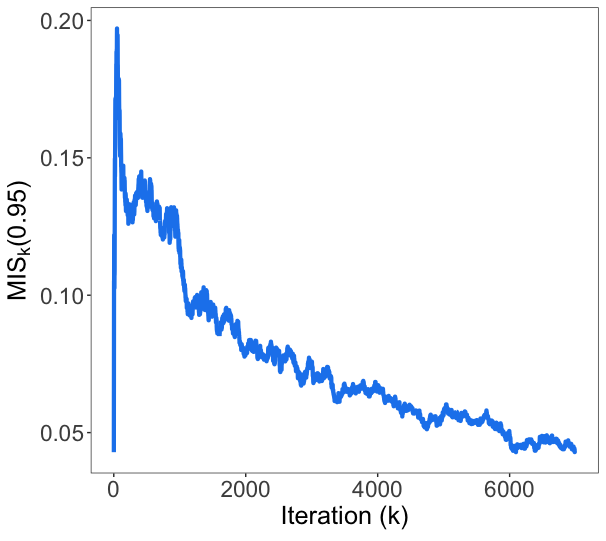}}
    \caption{(a) Convergence of $W_2\left(\phi_{DI}(\bar{\theta}_k),\phi_{DI}(\bar{\theta}_\infty)\right)$ (b) comparison of theoretical asymptotic density and observed density of $\phi_{DI}(\bar{\theta}_k)$ (c) Online Bootstrap 95\% CI (d) $MIS(\hat\phi_{DI,0.025},\hat\phi_{DI,0.975};0.95)$ under CE loss for Adult data at risk of DI.}\label{fig:logistic_adult_di}
    \end{figure*}
\section{Methodology: Online CI Estimation}\label{sec:onlinecov}
In this section we present the bootstrap-based online CI estimation algorithm. We split the estimation into two parts:
\begin{enumerate}[wide=0pt]
\vspace{-0.1 in}
    \item Firstly, we use an online bootstrap method (Algorithm~\ref{alg:boot}) to generate a sample $\{\bar{\theta}_k^b\}_{b=1}^B$ of size $B$ at each time $k$ to estimate the distribution of the averaged iterate of the optimization algorithm.
    \vspace{-0.05in}
    \item To estimate the CI of fairness using $\{\bar{\theta}_k^b\}_{b=1}^B$, we use a small held-out dataset $\tilde{\mathcal{D}}$ independent of $\mathcal{D}'$. Let $n_0$ and $n_1$ denote the number of samples in $\tilde{\mathcal{D}}$ with $z_i=0$ and $z_i=1$ respectively. Let ${n}_{0,-}$ and ${n}_{1,-}$ denote the number of samples in $\tilde{\mathcal{D}}$ with $z_i=0,y_i=-1$ and $z_i=1,y_i=-1$ respectively. Using \eqref{eq:phididef} and \eqref{eq:phidmdef}, we generate $B$ samples of fairness estimates $\{\hat{\phi}_{DI}(\bar{\theta}_k^b)\}_{b=1}^B$ and $\{\hat{\phi}_{DM}(\bar{\theta}_k^b)\}_{b=1}^B$. For any given $\theta$, $\hat{\phi}_{DI}(\theta)$, and $\hat{\phi}_{DM}(\theta)$ are evaluated on $\tilde{\mathcal{D}}$ as
    \begin{equation}\label{eq:didm}
\begin{aligned}
  &\hat{\phi}_{DI}({\theta})=\abs{n_0^{-1}\sum_{i\in\tilde{\mathcal{D}},z_i=0}u_i-n_1^{-1}\sum_{i\in\tilde{\mathcal{D}},z_i=1}u_i} \\
   & \hat{\phi}_{DM}({\theta})=\abs{\frac{\sum_{i\in\tilde{\mathcal{D}},z_i=0,y_i=-1}u_i}{{n_{0,-}}}-\frac{\sum_{i\in\tilde{\mathcal{D}},z_i=1,y_i=-1}u_i}{n_{1,-}}},
\end{aligned}
\end{equation}
where $u_i=\mathbbm{1}({{\theta}}^\top x_i>0)$.
    For $0<\alpha<1$, to estimate CI of significance level $\alpha$ of ${\phi}_{DI}(\bar{\theta}_k)$, and ${\phi}_{DM}(\bar{\theta}_k)$ we use the sample CIs $[\hat\phi_{DI,\alpha/2},\hat\phi_{DI,1-\alpha/2}]$, and $[\hat\phi_{DM,\alpha/2},\hat\phi_{DM,1-\alpha/2}]$ of $\{\hat{\phi}_{DI}(\bar{\theta}_k^b)\}_{b=1}^B$ and $\{\hat{\phi}_{DM}(\bar{\theta}_k^b)\}_{b=1}^B$ respectively. 
\end{enumerate}  
\vspace{-0.1in}
\begin{algorithm}[h]
	\caption{Online Bootstrap for $1-\alpha$ CI Estimation} \label{alg:boot}
	\textbf{Input:}$\{\eta_k\}_k$, $\theta_0,z_0\in\mathbb{R}^d$, $\mathcal{V}$, fairness criterion, $\alpha$
	\begin{algorithmic}[1]
		\STATE \textbf{for} $k=1,\cdots,T$ \textbf{do}
		\STATE \textbf{for} $b=1,2,\cdots,B$
        \IF{fairness criterion = DI}
		\STATE \textbf{Update} $\theta_{k}^b=\Pi_{\Theta}(-z_{k-1}^b)$ 
		\STATE \textbf{Sample} $V_{k}^b\sim\mathcal{V}$
		\STATE $z_{k}^b=z_{k-1}^b+\eta_{k}V_{k}^b\nabla L(\theta_{k}^b,W_{k})$ 
        \STATE $\bar{\theta}_k^b=k^{-1}(\bar{\theta}_{k-1}^b+\theta_k^b)$
        %\STATE Compute $\phi_{DI}(\bar{\theta}_{k}^b)$ using \eqref{eq:didm}
        \ENDIF
        \IF{fairness criterion = DM }
        \STATE $\theta_{k}^b=\theta_{k-1}^b-\eta_{k}V_{k}^b\nabla L(\theta_{k}^b,W_{k})$
        \STATE $\bar{\theta}_k^b=k^{-1}(\bar{\theta}_{k-1}^b+\theta_k^b)$
        %\STATE Compute $\phi_{DM}(\bar{\theta}_{k}^b)$ using \eqref{eq:didm}
        \ENDIF
		\STATE \textbf{end for}
		\STATE \textbf{end for}
  \STATE Compute $\{\hat{\phi}_{DI[DM]}(\bar{\theta}_{T}^b)\}_{b=1}^B$ using \eqref{eq:didm}
    \STATE \textbf{Output:}
    %\IF{fairness criterion = DI[DM]} 
    \STATE $[\hat\phi_{DI[DM],\alpha/2},\hat\phi_{DI[DM],1-\alpha/2}]$
    %\ENDIF
    %\IF{fairness criterion = DM}
    %\STATE $[\hat\phi_{DM,\alpha/2},\hat\phi_{DM,1-\alpha/2}]$
    %\ENDIF
	\end{algorithmic}	
\end{algorithm}
It is easy to generate $\{\bar{\theta}_k^b\}_{b=1}^B$ when one has access to $B$ independent data streams. Then one can simply run the algorithm $B$ times on independent data sequences to generate $B$ samples of $\bar{\theta}_k$. But we have to estimate the distribution of $\bar{\theta}_k$ in an online fashion with access to only one sequence of data-points. So we use online bootstrap method (Algorithm~\ref{alg:boot}). \cite{fang2018online} shows that the online bootstrap algorithm can approximate the asymptotic distribution of SGD-based estimators for strongly-convex objectives in an unconstrained setting. So these results in Theorem 2 and Theorem 3 of \cite{fang2018online} are readily applicable for SGD \eqref{eq:sgd} but not for Algorithm~\ref{alg:DA}. In this work, we extend these results to the constrained setting in order to provide similar guarantees for Algorithm~\ref{alg:DA}. To the best of our knowledge, this is the first theoretical result on online inference for constrained stochastic optimization which could be of independent interest. 

In the online bootstrap algorithm (Algorithm~\ref{alg:boot}), at every iteration $k$, we generate $B$ perturbed iterates given by line 4-6 for DI, and line 11 for DM, 
\iffalse
\begin{align*}
   & \theta_{k+1}^b=\Pi_\Theta(-z_{k}^b), ~~ b=1,2,\cdots,B\\
   &z_{k+1}^b=z_k^b+\eta_{k+1}V_{k+1}^b\nabla L(\theta_{k}^b,W_k), \numberthis\label{eq:bootiterate}
\end{align*}
\fi
where $\{V_{k}^b\}_{k,b}$ is a sequence of $\iid$ univariate random variables from the distribution $\mathcal{V}$ with $\expec{V_{k}^b}{}=1$, $var\left[V_{k}^b\right]=1$, and $\theta_{0}^b=\theta_0$ for all $k\geq 1$, $b=1,2,\cdots,B$. Then the empirical distribution of $\bar{\theta}_n^b-\bar{\theta}_n$ is the approximation of the distribution of $\bar{\theta}_n-\theta^*$ where $\bar{\theta}_n^b=n^{-1}\sum_{i=1}^n\theta_i^b$. Observe that, in keeping with online optimization regime, generation of $\{\bar{\theta}_k^b\}_b$ require only one data point $W_k$ at iteration $k$. 
\begin{figure*}[h] 
    \centering
    \subfigure[]{\label{fig:W2_logistic_synthetic_DM}\includegraphics[width=40mm,height=35.5mm]{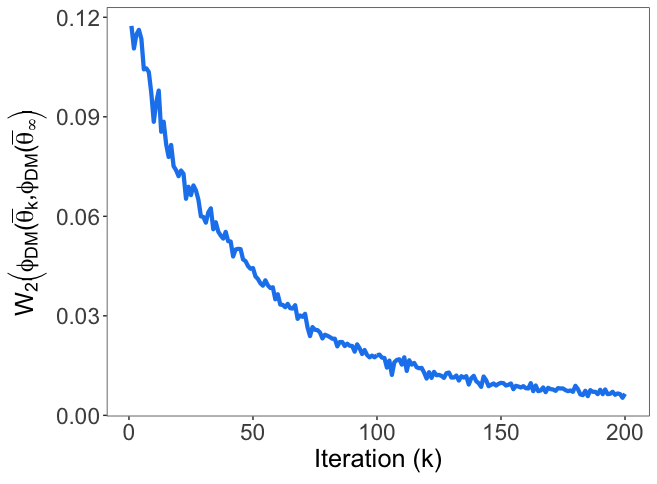}}
    \subfigure[]{\label{fig:CI_logistic_synthetic_DM}\includegraphics[width=40mm,height=35.5mm]{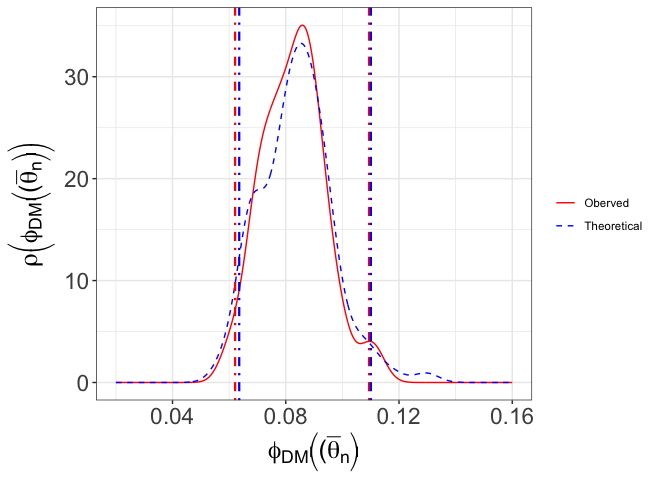}}
\subfigure{\label{fig:boot_coverage_logistic_synth_DM}\includegraphics[width=40mm]{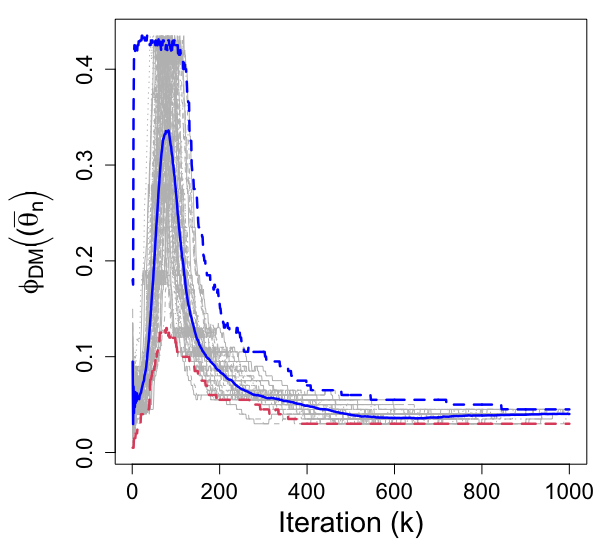}}
    \subfigure{\label{fig:MIS_logistic_synth_DM}\includegraphics[width=40mm]{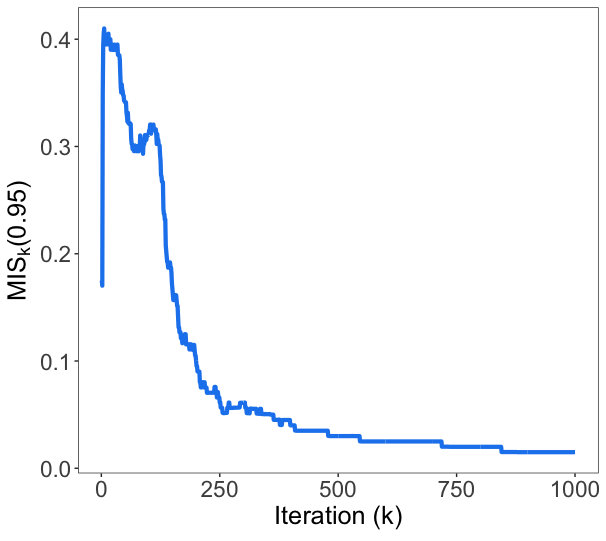}}
    \caption{(a) Convergence of $W_2\left(\phi_{DM}(\bar{\theta}_k),\phi_{DM}(\bar{\theta}_\infty)\right)$ (b) comparison of theoretical asymptotic density and observed density of $\phi_{DM}(\bar{\theta}_k)$ (c) Online Bootstrap 95\% CI (d) $MIS(\hat\phi_{DM,0.025},\hat\phi_{DM,0.975};0.95)$ under CE loss for synthetic data vulnerable to DM.}\label{fig:logistic_synthetic_DM}
    \end{figure*}
\vspace{-0.08in}
\begin{theorem}\label{th:bootdistconv}
    Let Assumptions~\ref{as:strongcon}-\ref{as:lincon} be true. Then, choosing $\eta_k= k^{-a}$, $1/2<a<1$ in Algorithm~\ref{alg:boot}, for any $b\in{1,2,\cdots,B}$, we have, $\bar{\theta}_k\overset{a.s.}{\to}\theta_{DI[DM]}^*$, and 
    \vspace{-0.1in}
    \begin{enumerate}[wide=0pt]
        \item $
    \sqrt{n}\left(\bar{\theta}_k^b-\bar{\theta}_k\right)\overset{d}{\sim}N\left(0,\Sigma_{DI[DM]}\right),
$
\item \begin{align*}
    \sup_{q\in\mathbb{R}^d}\abs{P(\sqrt{k}(\bar{\theta}_k^b-\bar{\theta}_k)\leq q)-P(\sqrt{k}(\bar{\theta}_k-\theta_{DI[DM]}^*)\leq q)}
    \overset{P}{\to} 0, 
\end{align*}
\item \begin{align*}
    &\sup_{c\in[0,1]}\left| P\left(\sqrt{k}\left(\hat{\phi}_{DI[DM]}(\bar{\theta}^b_k)-
    \phi_{DI[DM]}(\bar{\theta}_k)\right)\leq c\right)-\right.\\
    &\left. P\left(\sqrt{k}\left(\phi_{DI[DM]}(\bar{\theta}_k)-\phi_{DI[DM]}(\theta_{DI[DM]}^*)\right)\leq c\right)\right|\overset{P}{\to}0.
\end{align*}
    \end{enumerate}
\end{theorem}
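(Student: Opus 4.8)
The plan is to reduce both parts of the statement to the unconstrained online-bootstrap theory of \cite{fang2018online} and then push the parameter-level conclusions through the fairness maps. The DM case is immediate: the penalized objective \eqref{eq:mainprobsampledmuncon} is strongly convex and the update \eqref{eq:sgd} is vanilla SGD with Polyak--Ruppert averaging, so Theorems~2 and 3 of \cite{fang2018online} apply verbatim and yield the parameter-level claims (1) and (2). The real work is the DI case, where Algorithm~\ref{alg:DA} carries the projection $\Pi_\Theta$ onto the slab $\Theta=\{\theta:\abs{\tilde{x}^\top\theta}\le\epsilon\}$, so I would treat it first. The initial step is to show $\bar\theta_k^b\overset{a.s.}{\to}\theta_{DI}^*$ for each $b$ by the same Lyapunov/strong-convexity argument behind Theorem~\ref{th:duchiclt}, using that the mean-one multiplier $V_k^b$ preserves gradient unbiasedness and the variance bounds of Assumptions~\ref{as:noisevar}--\ref{as:asymvar} (up to the factor $\mathrm{Var}(V)=1$) and that $\Pi_\Theta$ is nonexpansive. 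I would then isolate the active regime: if $\theta_{DI}^*$ is interior the constraint is inactive, $P_A=I$, and the unconstrained theory applies directly, whereas in the active regime $\tilde{x}^\top\theta_{DI}^*=\pm\epsilon$ and, because $\Theta$ is a slab, the local projection onto the boundary hyperplane is exactly the linear map $P_A=I-\tilde{x}\tilde{x}^\top/\norm{\tilde{x}}_2^2$.

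The key idea is a subspace reduction. Near $\theta_{DI}^*$ the iterates collapse onto the active hyperplane, so the normal component along $\tilde{x}$ is pinned at order $\eta_k$ while the tangential component in $\mathrm{range}(P_A)$ evolves like an unconstrained dual-averaging recursion with reduced Hessian $P_A\nabla^2l(\theta_{DI}^*)P_A$ and projected gradient noise $P_A\nabla L(\theta_{DI}^*,W_k)$; the pseudoinverse in $\Sigma_{DI}$ appears precisely because this reduced Hessian is singular along $\tilde{x}$. For the bootstrap I would condition on the data stream $\{W_k\}$ and split $V_k^b\nabla L=\nabla L+(V_k^b-1)\nabla L$: the first term reproduces the (now data-determined) drift toward $\theta_{DI}^*$, while the second is a conditionally mean-zero martingale difference whose conditional covariance equals $\Sigma$ after projection, exactly because $\mathrm{Var}(V)=1$. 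Running the same subspace CLT argument used for Theorem~\ref{th:duchiclt}, now on the conditional law, gives claim (1), namely $\sqrt{n}(\bar\theta_k^b-\bar\theta_k)\overset{d}{\to}N(0,\Sigma_{DI})$.

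Claim (2) then follows from (1) together with Theorem~\ref{th:duchiclt}: both $\sqrt{k}(\bar\theta_k^b-\bar\theta_k)$ and $\sqrt{k}(\bar\theta_k-\theta_{DI}^*)$ converge to the same Gaussian, whose CDF is continuous, so by P\'olya's theorem the conditional bootstrap CDF converges uniformly (in probability) to that Gaussian CDF and the stated supremum over $q$ vanishes. For claim (3) I would transfer (2) through the fairness maps by a delta-method argument: the population map $\phi_{DI}(\theta)=\abs{P(\theta^\top\varphi(x)>0\mid z=0)-P(\theta^\top\varphi(x)>0\mid z=1)}$ is differentiable in $\theta$ wherever $x\mid z$ admits a density, with the outer absolute value contributing a directional (Hadamard) derivative, and $\hat\phi_{DI}$ is consistent for $\phi_{DI}$ near $\theta_{DI}^*$. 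The bootstrap delta method then carries the parameter convergence to $\sqrt{k}(\hat\phi_{DI}(\bar\theta_k^b)-\phi_{DI}(\bar\theta_k))$ and $\sqrt{k}(\phi_{DI}(\bar\theta_k)-\phi_{DI}(\theta_{DI}^*))$, and the uniform bound over $c\in[0,1]$ again follows from P\'olya-type continuity of the limit.

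The main obstacle I anticipate is the subspace CLT for the constrained recursion: unlike \cite{fang2018online}, the effective Hessian $P_A\nabla^2l(\theta_{DI}^*)P_A$ is rank-deficient, so I must rule out any persistent drift in the normal direction, show that the projection error does not contaminate the tangential fluctuations at the $\sqrt{k}$ scale, and verify that the multiplier-induced noise, once projected by $P_A$, reproduces exactly $\Sigma_{DI}$ rather than a distorted covariance. A secondary difficulty is the nonsmoothness of $\phi_{DI}$ and $\phi_{DM}$ (both the indicator thresholds and the outer absolute value), which forces the directional form of the delta method and a separate treatment when $\phi_{DI}(\theta_{DI}^*)=0$, where the limit is a folded rather than centered Gaussian.
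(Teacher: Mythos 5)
Your proposal follows essentially the same route as the paper's proof: the DM case is delegated to the unconstrained bootstrap theory of \cite{fang2018online}, and the DI case proceeds via almost-sure convergence of the perturbed iterates, finite-time identification of the active hyperplane, reduction to the tangential subspace with reduced Hessian $P_A\nabla^2 l(\theta_{DI}^*)P_A$ and projected noise, a martingale CLT driven by the $(V_k^b-1)$ multiplier fluctuations, P\'olya-type uniform convergence for part (2), and the delta method for part (3). The only notable differences are that the paper imports the active-set identification and the linearized representation of the averaged iterates directly from \cite{duchi2016asymptotic} (Theorem 3 and Proposition 2, equation (65)) rather than re-deriving the subspace CLT as you anticipate doing, and that your caution about the nonsmoothness of $\phi_{DI[DM]}$ (directional Hadamard derivatives and the folded-Gaussian limit when $\phi_{DI}(\theta_{DI}^*)=0$) is in fact warranted, since the paper's proof of part (3) simply asserts differentiability and applies the standard delta method.
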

\vspace{-0.1in}
\begin{remark}[On computation]
    The bootstrap procedure at most changes the gradient computation by $O(1)$ since $B$ is a constant and does not grow with iteration $n$. These bootstrap sample computations can be parallelized easily. Additionally, Algorithm~\ref{alg:boot} respects the online learning regime, and does not impose any extra data-burden since all the bootstrap samples are evaluated using the same data-point.
\end{remark}
\begin{remark}[On interpretability]\label{rm:featureimp}
     One can also construct CI for $\bar{\theta}_{k}$ based on $B$ bootstrap samples $\{\bar{\theta}_k^b\}_{b=1}^B$. This could shed light on the features which are significant for fairness-aware classification. This leads to better model interpretability.  
\end{remark}
Since \eqref{eq:mainprobsampledm} is an unconstrained strongly convex problem and we use SGD to optimize, the proof of the theorem for $\phi_{DM}(\bar{\theta}_k^b)$ readily follows from Theorem 2 of \cite{fang2018online}. We only prove Theorem~\ref{th:bootdistconv} for $\phi_{DI}(\bar{\theta}_k^b)$. This is our main theoretical contribution. We provide a proof outline here while deferring the detailed proof to the Appendix~\ref{pf:bootdistconv}.

\textbf{Proof Outline:}
First we show since we are multiplying by $\iid$ perturbation variable $\mathcal{V}_k^b$ whose mean 1, we do not lose the unbiasedness of the gradient noise. By Assumption~\ref{as:noisevar}, and the fact that $V_k^b$ is independent of $\theta_k^b$, and $W_k$, we have the following regularity condition on the perturbed noisy gradient analogous to Assumption~\ref{as:noisevar},
\begin{align*}
\expec{{V_k^b}^2\norm{\nabla L(\theta,W)-\nabla L(\theta^*,W)}_2^2}{}\leq 2L_E\norm{\theta-\theta^*}_2^2. 
\end{align*} 
Then using Theorem 3 of \cite{duchi2016asymptotic} we show that the bootstrapped iterates identify the active constraints of the form $A^b\theta_k^b=\epsilon$ correctly after finite number of steps $K$. This implies that for $k\geq K$, $P_A\theta_k=\theta_k$ where,
$P_A=I-{A}^\top(A{A}^\top)^\dagger  A=I-\tilde{x}\tilde{x}^\top/\norm{\tilde{x}}_2^2$. Then we show that the projected bootstrapped iterates $P_A\theta_k^b$ satisfies the following,
\begin{align*}
    \sqrt{k}(\bar{\theta}_k^b-\theta^*)\overset{d}{\to}N(0,\Sigma_{DI}).
\end{align*}
Then we show that,
\begin{align*}
    \sqrt{n}\left(\bar{\theta}_n^b-\bar{\theta}_n\right)=&P_H^\dagger\frac{1}{\sqrt{n}}\sum_{k=1}^n (V_k^b-1)P_A\nabla L(\theta_{DI}^*,W_k)+\\
    &o_P(1),
\end{align*}
where $P_H=P_A\nabla^2 l(\theta_{DI}^*)P_A$.
Then rest of the proofs of the first 2 parts mainly follow by using Martingale CLT theorem \cite{hall2014martingale}. The proof of the consistency of $\hat{\phi}_{DI}(\bar{\theta}_k^b)$ predominantly depends on delta method. 
\iffalse
\begin{theorem}\label{th:bootdistconvdm}
    Let Assumptions~\ref{as:strongcon}-\ref{as:noisevar} be true. Then, choosing $\eta_k= k^{-a}$, $1/2<a<1$ in Algorithm~\ref{alg:boot}, for any $b\in{1,2,\cdots,B}$, we have
    \begin{enumerate}[wide=0pt]
        \item $
    \sqrt{n}\left(\bar{\theta_k^b}-\bar{\theta_k}\right)\overset{d}{\sim}N\left(0,\Sigma_{DM}\right),
$
\item \begin{align*}
    &\sup_{q\in\mathbb{R}^d}\abs{P(\sqrt{n}(\bar{\theta}_k^b-\bar{\theta}_k)\leq q)-P(\sqrt{n}(\bar{\theta}_k-\theta_{DM}^*)\leq q)}\\
    &\overset{P}{\to} 0, 
\end{align*}
\item $
    \hat{\phi}_{DM}(\bar{\theta}_k^b)\overset{a.s.}{\to}{\phi}_{DM}(\bar{\theta}_k).
$
    \end{enumerate}
\end{theorem}
\fi

We use Mean Interval Score (MIS) to evaluate the quality of the estimated CI. This is a better scoring function to evaluate interval predictions compared Brier score, and Continuous Ranked Probability Score; see \cite{wu2021quantifying} for details. Given a sample of size $N$ of a random variable $s$, and an estimate $[l,u]$ of CI of level $\alpha$, $MIS_N(l,u;\alpha)$ is defined as
\begin{align*}
   MIS_N(l,u;\alpha)&=u-l+\frac{2}{N\alpha}\sum_{i=1}^N(s_i-u)\mathbbm{1}(s_i>u)\\
   &+\frac{2}{N\alpha}\sum_{i=1}^N(l-s_i)\mathbbm{1}(s_i<l) .\numberthis\label{eq:MISdef}
\end{align*}
It is shown in \cite{wu2021quantifying} that $MIS_N(l,u;\alpha)$ is minimized at $[l^*,u^*]$ where $[l^*,u^*]$ is the true $(1-\alpha)$ CI. Qualitatively, MIS favors narrower well-calibrated interval. 
\vspace{-0.1in}
\section{Results}\label{sec:results}
In this section we present our results on synthetic and real datasets. For each dataset, we show that the observed density converges to the theoretical asymptotic density in $(W_2)$ distance, and MIS for the CI provided by Algorithm~\ref{alg:boot} decreases over time. Across all experiments, to find the global optima we initially run the algorithm once until $\norm{\theta_k-\theta_{k-1}}_2<10^{-7}$. Since there is no guideline to choose $\epsilon$, we chose $\epsilon$ for which the mean accuracy level under CE loss matches with \cite{zafar2019fairness}. We study the two most widely-used loss functions, namely Cross-Entropy (CE), and squared loss. CE loss is strictly convex and squared loss is almost surely strongly convex for $\iid$ data. We add a regularizer $R(\theta)=\varkappa\norm{\theta}_2^2/2$ to the canonical expected losses to ensure strong convexity. Note that $\varkappa$ can be tuned to ensure sparsity in high-dimensional case. We choose small $\varkappa>0$ since our goal is not sparsity but just to ensure strong convexity. The detailed form of loss function are presented in Appendix~\ref{sec:lossfn}. The plots corresponding to squared loss are in Appendix~\ref{sec:sqplots} due to lack of space. To generate bootstrap samples we use uniform random variable $V_k^b\sim\mathcal{V}=U[1-\sqrt{3},1+\sqrt{3}]$.
\begin{figure*}[h] 
    \centering
    \subfigure[]{\label{fig:W2_logistic_compas_DM}\includegraphics[width=40mm,height=35.5mm]{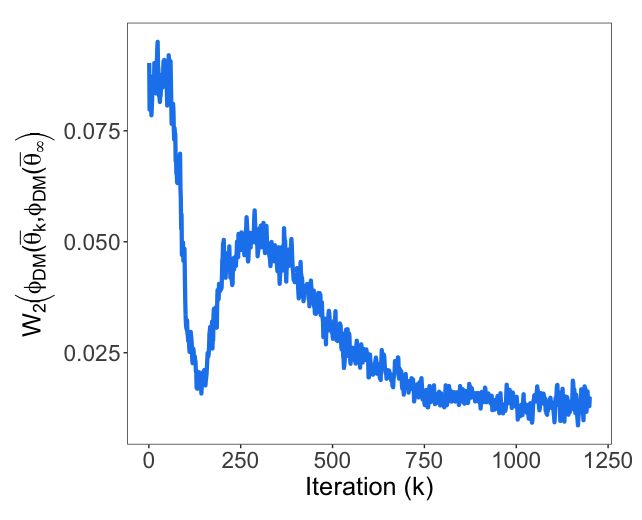}}
    \subfigure[]{\label{fig:CI_logistic_compas_DM}\includegraphics[width=40mm,height=35.5mm]{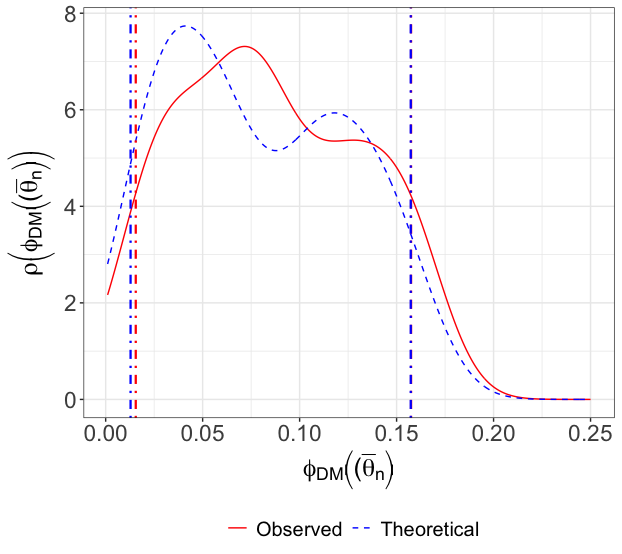}}
    \subfigure[]{\label{fig:boot_coverage_logistic_compas}\includegraphics[width=40mm,height=35.5mm]{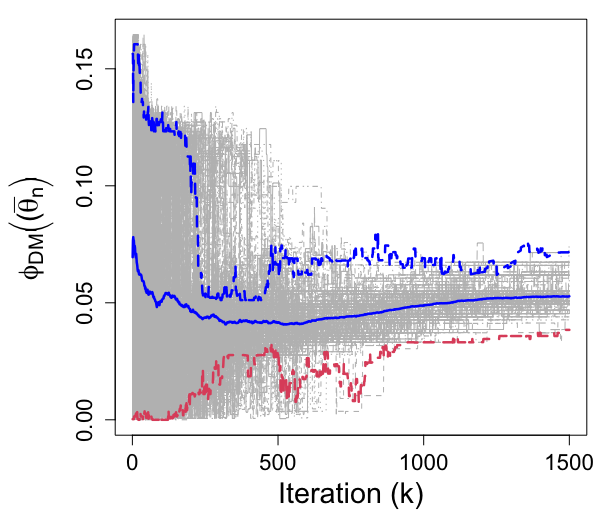}}
    \subfigure[]{\label{fig:MIS_logistic_compas}\includegraphics[width=40mm,height=35.5mm]{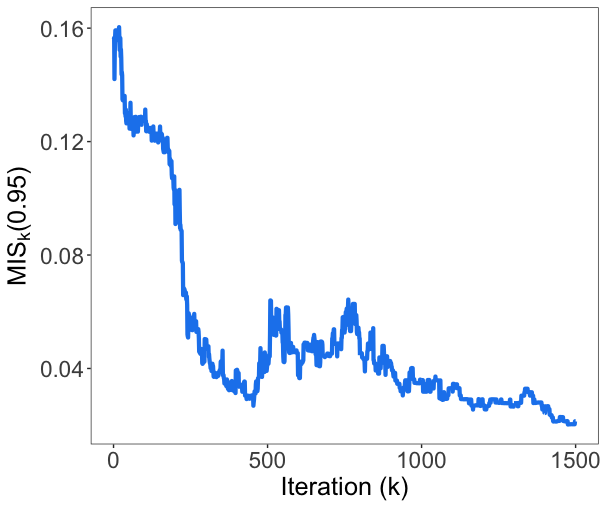}}
    \caption{(a) Convergence of $W_2\left(\phi_{DM}(\bar{\theta}_k),\phi_{DM}(\bar{\theta}_\infty)\right)$ (b) comparison of theoretical asymptotic density and observed density of $\phi_{DM}(\bar{\theta}_k)$ (c) Online Bootstrap 95\% CI (d) $MIS(\hat\phi_{DM,0.025},\hat\phi_{DM,0.975};0.95)$ under CE loss for COMPAS data.}\label{fig:logistic_compas_DM}
    \end{figure*}
\subsection{Disparate Impact}
\subsubsection{Synthetic Dataset}\label{sec:synthdi}
\iffalse
\begin{figure}[h] 
    \centering
    \subfigure[CE Loss.]{\label{fig:W2_logistic_synthetic}\includegraphics[width=40mm]{W2_logistic_synthetic.png}}
    \subfigure[Squared Loss.]{\label{fig:W2_sqr_synthetic}\includegraphics[width=40mm]{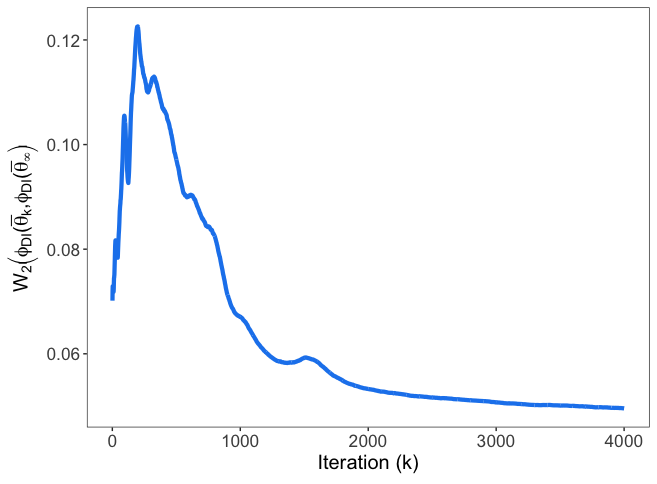}}
    \caption{Convergence of $W_2\left(\phi_{DI}(\bar{\theta}_k),\phi_{DI}(\bar{\theta}_\infty)\right)$ for CE and squared loss for synthetic data vulnerable to DI. }\label{fig:W2synth}
\end{figure}
\fi
Similar to \cite{zafar2019fairness}, we choose $\rho(x|y=1)=N\left(\begin{bmatrix}1.5;1.5\end{bmatrix},\begin{bmatrix}0.4,0.2;0.2,0.3\end{bmatrix}\right)$, and $\rho(x|y=1)=N\left(\begin{bmatrix}-1.5;-1.5\end{bmatrix},\begin{bmatrix}0.6,0.1;0.1,0.4\end{bmatrix}\right)$. Then we choose the sensitive attribute $z$ as a Bernoulli random variable with $P(z=1)=\rho(x'|y=1)/(\rho(x'|y=1)+\rho(x'|y=-1))$, where $x'=\begin{bmatrix}\cos(\pi/3),-\sin(\pi/3);\sin(\pi/3),\cos(\pi/3)\end{bmatrix}x$.In Figure~\ref{fig:logistic_synth_DI}: (a) shows that $W_2(\phi_{DI}(\bar{\theta}_k),\phi_{DI} tv(\bar{\theta}_\infty)$, where $\bar{\theta}_\infty\sim N(\theta_{DI}^*,\Sigma_{DI}/k)$, decreases over iterations; (b) shows that after $1000$ ($4000$) iterations, the observed density of $\phi_{DI}(\bar{\theta}_{1000})$ ($\phi_{DI}(\bar{\theta}_{4000})$), and the theoretical density given by $\phi_{DI}(\bar{\theta}_\infty)$ (dashed line) almost overlap. The vertical dashed lines mark the observed (red) and theoretical (blue) $95\%$ CI (c)-(d) show that DI values across repetitions are well contained in the computed CI and $MIS(\hat\phi_{DI,0.025},\hat\phi_{DI,0.975};0.95)$ becomes small respectively. In (c), the grey lines show the variation over repetitions. 

\subsubsection{Adult Dataset}
\iffalse
\begin{figure}[h] 
    \centering
    \subfigure[CE Loss.]{\label{fig:W2_logistic_adult}\includegraphics[width=40mm]{W2_logistic_adult.png}}
    \subfigure[Squared Loss.]{\label{fig:W2_sqr_adult}\includegraphics[width=40mm]{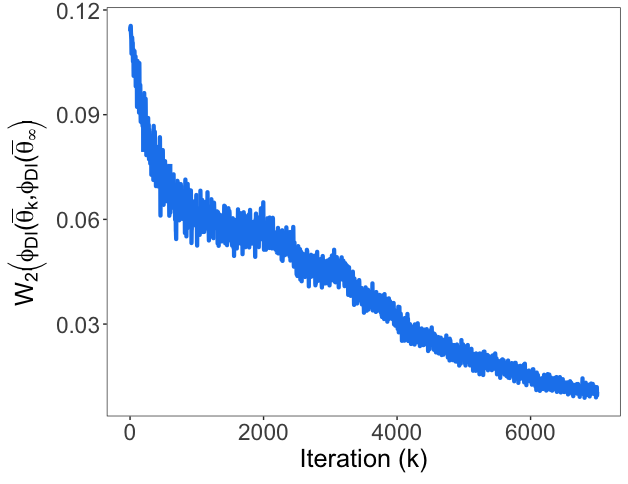}}
    \caption{Convergence of $W_2\left(\phi_{DI}(\bar{\theta}_k),\phi_{DI}(\bar{\theta}_\infty)\right)$ for CE and squared loss for Adult Dataset. }\label{fig:W2adult}
\end{figure}
\fi
Here we use the \texttt{Adult} dataset \cite{Dua:2019} which contains income data of adults. Two classes indicate whether income is $\geq 50K$ or $< 50K$. We use $13$ features for classification and \texttt{Sex} as the sensitive feature here. More details on the dataset is provided in the Appendix. Similar to Figure~\ref{fig:logistic_synth_DI}, in Figure~\ref{fig:logistic_adult_di}: (a)-(b) shows the convergence to the asymptotic distribution in terms of $W_2$ distance, and CI. Online CI estimation results are in (c)-(d). 
%For squared loss here, $MIS(\hat\phi_{DI,0.025},\hat\phi_{DI,0.975};0.95)$ increases with training. This is because $\phi_{DI}(\bar{\theta}_n)$ varies considerably  (top right of Figure~\ref{fig:boot_adult} even though the mean is stabilized, and the true CI is wide here. It is easy to see that the estimated CI is indeed the almost the narrowest possible CI maintaining the correct coverage probability. 
\vspace{-0.05in}
\subsection{Disparate Mistreatment}
\subsubsection{Synthetic Dataset}
\iffalse
\begin{figure}[t] 
    \centering
    \subfigure[CE Loss.]{\label{fig:W2_logistic_synthetic_DM}\includegraphics[width=40mm]{W2_logistic_synthetic_DM.png}}
    \subfigure[Squared Loss.]{\label{fig:W2_sqr_synthetic_DM}\includegraphics[width=40mm]{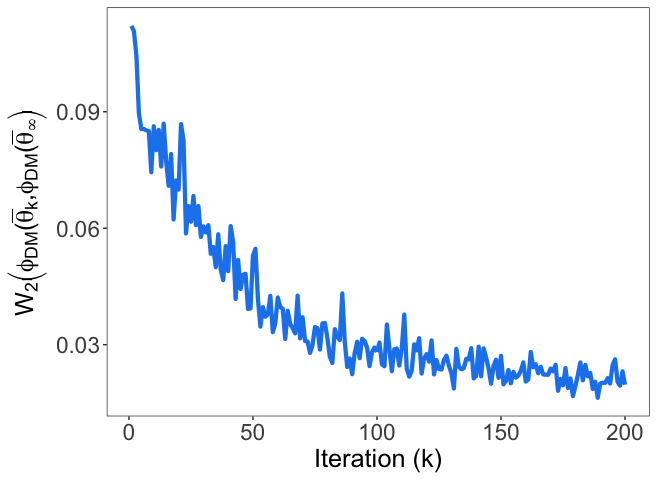}}
    \caption{Convergence of $W_2\left(\phi_{DM}(\bar{\theta}_k),\phi_{DM}(\bar{\theta}_\infty)\right)$ for CE and squared loss. }\label{fig:W2synthDM}
\end{figure}
\fi
We use the synthetic dataset introduced in \cite{zafar2019fairness}, i.e., we choose $\rho(x|z=0,y=1)=\rho(x|z=1,y=1)=N\left(\begin{bmatrix}2;2\end{bmatrix},\Sigma_1\right)$, $\rho(x|z=0,y=-1)=N\left(\begin{bmatrix}1;1\end{bmatrix},\Sigma_1\right)$, and $\rho(x|z=1,y=-1)=N\left(\begin{bmatrix}-2;-2\end{bmatrix},\Sigma_1\right)$ where $\Sigma_1=\begin{bmatrix}3,1;1,3\end{bmatrix}$. In Figure~\ref{fig:logistic_synthetic_DM}: (a) shows that $W_2(\phi_{DM}(\bar{\theta}_k),\phi_{DM}(\bar{\theta}_\infty))$, where $\bar{\theta}_\infty\sim N(\theta_{DM}^*,\Sigma_{DM}/k)$, becomes small over iterations; (b) shows that the observed density of $\phi_{DM}(\bar{\theta}_{1000})$, and the theoretical density given by $\phi_{DI}(\bar{\theta}_\infty)$ almost overlap. Similar results to Section~\ref{sec:synthdi} on online bootstrap CI estimation are in (c)-(d). 

\subsubsection{COMPAS Dataset}
We use the Propublica COMPAS dataset \cite{larson2016surya} containing data about criminal defendants. The goal is to classify subjects into two classes representing whether the subject will recideviate within two years (positive class) or not (negative class).  After performing the data processing as in \cite{zafar2019fairness}, we have 5278 subjects with 5 features. We use \texttt{Race} as the sensitive feature. In Figure~\ref{fig:logistic_compas_DM}: (a)-(b) show convergence to the asymptotic distribution in terms of $W_2$ distance and CI. (c)-(d) show that the estimated CI tightly contains the trajectories over multiple repetitions, and $MIS(\hat\phi_{DM,0.025},\hat\phi_{DM,0.975};0.95)$ captures the CI width correctly.
\iffalse
\begin{figure}[t] 
    \centering
    \subfigure[CE Loss.]{\label{fig:W2_logistic_compas_DM}\includegraphics[width=40mm]{W2_logistic_compas_DM.png}}
    \subfigure[Squared Loss.]{\label{fig:W2_sqr_compas_DM}\includegraphics[width=40mm]{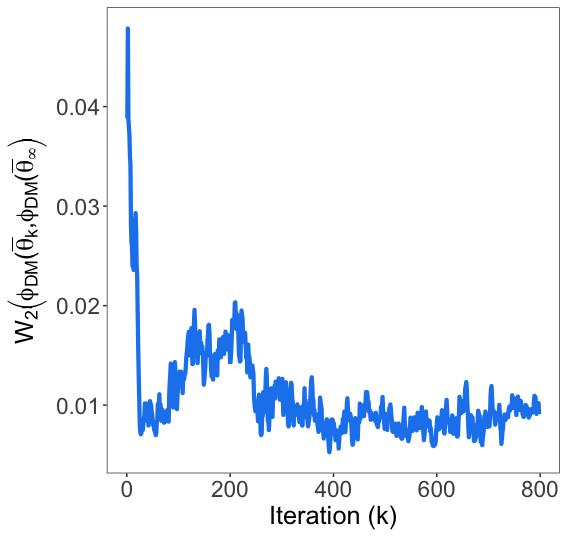}}
    \caption{Convergence of $W_2\left(\phi_{DM}(\bar{\theta}_k),\phi_{DM}(\bar{\theta}_\infty)\right)$ for CE and squared loss. }\label{fig:W2compasDM}
\end{figure}
\fi
\iffalse
\subsection{Bank Dataset}
In this example we are using a subset of the bank dataset \cite{moro2014data}. This data contains 13 features on $1042$ clients of a bank to judge whether they will subscribe for a term deposit or not. \textit{Age} is the sensitive feature in this dataset. Similar to \cite{zafar2019fairness}, we consider clients belonging to the age group between 25 and 60 as the protected group and the others as non-protected group. To evaluate the fairness constraints \eqref{eq:mainprob}, we use a subset of $100$ samples from the main dataset. Then we split the rest of the samples randomly into training data and test data with training data size $800$. Figure~\ref{fig:cilrbank} shows the $95\%$ CI of the disparate impact of the learned model. Also note that, similar to the simulation setting, here too we have a multi-modal distribution \textcolor{blue}{think about possible explanation}. Moreover, we confirm by a Henze-Zirkler test that the distribution of the final iterate is indeed a multivariate normal. 
\fi
\vspace{-0.1in}
\section{Discussion and Future Work}\label{sec:disc}
In this work we show that asymptotic normality holds for the model parameter of a linear binary classifier when trained with SDA and SGD under DI and DM constraints respectively. Since the asymptotic covariances $\Sigma_{DI}$, and $\Sigma_{DM}$ depend on the unknown global minimizers $\theta_{DI}^*$, $\theta_{DM}^*$, we propose an online bootstrap-based CI estimation method for $\phi_{DI}(\bar{\theta}_n)$, and $\phi_{DM}(\bar{\theta}_n)$. To this end, we extend the theoretical results on online covariance estimation for unconstrained stochastic optimization in \cite{fang2018online} to the constrained setting. The main novelty of this work is that to the best of our knowledge, this is the first work to study UQ of DI and DM as well as establish theoretical online CI estimation results for constrained stochastic optimization. We illustrate our results on synthetic and real datasets. Various future research directions can be explored. Extending this work to nonlinear, especially neural network classifiers, is an important and quite challenging problem. Inference of fairness for data with more than one sensitive features is another intriguing direction.
\clearpage
\bibliography{fairness}
\bibliographystyle{alpha}
\newpage
\appendix
\onecolumn
\section{Appendix}
\subsection{Proof Theorem~\ref{th:duchiclt}}
\begin{proof}[Proof of Theorem~\ref{th:duchiclt}]\label{pf:duchiclt}
Here the constraint can be split into two constraints $\tilde{x}^\top \theta\leq \epsilon$ and $(-\tilde{x})^\top \theta\leq \epsilon$. Clearly only one of these constraints can be active at a time. At the optima $\theta=\theta_{DI}^*$, let's denote the active constraint by $A\theta^*_{DI}=\epsilon$. Then the KKT condition is given by,
\begin{align*}
    \nabla l(\theta^*_{DI})+\lambda^*A=0,
\end{align*}
where $\lambda^*>0$.
Since our constraint is linear, using Assumption~\ref{as:strongcon}, for any $s\in\mathbb{R}^d$, we have
\begin{align*}
    s^\top\left[\nabla^2l(\theta^*_{DI})+\nabla^2(A\theta^*_{DI})\right]s\geq \mu \norm{s}_2^2. \numberthis\label{:kktstrong}
\end{align*}
This implies Assumption C of \cite{duchi2016asymptotic}.
We first need the following result from \ref{th:duchiclt}.
\begin{lemma}[Theorem 3 \cite{duchi2016asymptotic}]\label{lm:constrid}
Let Assumption~\ref{as:strongcon}-\ref{as:noisevar}, and Assumption~\ref{as:lincon} hold. Then, with probability one, there exists some $K<\infty$, such that for $k\geq K$,
\begin{align*}
    A\theta_k=\epsilon.
\end{align*}
\end{lemma}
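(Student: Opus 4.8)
The final statement, Lemma~\ref{lm:constrid}, is the finite-time active-set identification step on which the whole CLT rests; it is quoted as Theorem~3 of \cite{duchi2016asymptotic}, so my plan is to verify that paper's hypotheses in our setting and then give the self-contained mechanism that makes identification work for the slab $\Theta$. At the optimum exactly one face is binding, say $A\theta^*_{DI}=\epsilon$ with $A=\tilde x^\top$, and the KKT relation reads $\nabla l(\theta^*_{DI})=-\lambda^*\tilde x$. Everything hinges on \emph{strict complementarity} $\lambda^*>0$; I would first note this holds whenever the unconstrained minimizer of $l$ is infeasible, i.e. whenever the fairness constraint genuinely binds, which is the only interesting regime (otherwise no face is active and $P_A=I$). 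The restricted strong convexity recorded in \eqref{:kktstrong} is precisely Assumption~C of \cite{duchi2016asymptotic}, so the hypotheses there are met.

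The second step converts the dual-averaging update into a statement about which face the projection hits. Using the closed form of $\Pi_\Theta$ displayed before Algorithm~\ref{alg:DA}, the iterate $\theta_k=\Pi_\Theta(-z_{k-1})$ lands on the active face $\{\tilde x^\top\theta=\epsilon\}$ exactly when $\tilde x^\top z_{k-1}<-\epsilon$. Hence it suffices to show that $\tilde x^\top z_{k-1}$ eventually stays below $-\epsilon$ almost surely.

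The heart of the argument is a drift-versus-noise split of the scalar $\tilde x^\top z_{k-1}=\tilde x^\top z_0+\sum_{i<k}\eta_i\,\tilde x^\top\nabla l(\theta_i)+\sum_{i<k}\eta_i\,\tilde x^\top\big(g_i-\nabla l(\theta_i)\big)$, where $g_i=\nabla L(\theta_i,W_i)$. Given a.s. convergence $\theta_i\to\theta^*_{DI}$ (a.s. convergence of the SDA iterates, standard under $\mu$-strong convexity and $\eta_k=k^{-a}$, $a\in(1/2,1)$) and continuity of $\nabla l$, the drift increments converge to $\tilde x^\top\nabla l(\theta^*_{DI})=-\lambda^*\|\tilde x\|_2^2<0$, and since $\sum_i\eta_i=\infty$ for $a<1$ the drift sum diverges to $-\infty$. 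The noise term is a martingale whose increment variances are bounded by $\eta_i^2\,\mathbb{E}\|g_i-\nabla l(\theta_i)\|_2^2$, which is summable because Assumption~\ref{as:noisevar} forces the noise variance to shrink like $\|\theta_i-\theta^*\|_2^2\to0$; so the martingale converges a.s. and is dominated by the drift. Therefore $\tilde x^\top z_{k-1}\to-\infty$ a.s., giving $\tilde x^\top z_{k-1}<-\epsilon$ for all $k\ge K$ with some finite $K$, which is the claim.

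The main obstacle is the almost-sure, pathwise control of the noise: I need $\sum_{i<k}\eta_i\,\tilde x^\top(g_i-\nabla l(\theta_i))=o\big(\sum_{i<k}\eta_i\big)$ on almost every trajectory, which couples a martingale convergence argument with the variance bound of Assumption~\ref{as:noisevar} fed by the a.s. convergence of the iterates; the degenerate case $\lambda^*=0$ would kill the drift and must be excluded up front. Once the active face is fixed for $k\ge K$, the iterates live on an affine subspace and the projection reduces to the constant map $P_A=I-\tilde x\tilde x^\top/\|\tilde x\|_2^2$, so the residual problem is effectively unconstrained; the advertised CLT $\sqrt n(\bar\theta_n-\theta^*_{DI})\Rightarrow N(0,\Sigma_{DI})$ with the sandwich covariance then follows from a Polyak--Ruppert averaging and martingale-CLT analysis restricted to $\mathrm{range}(P_A)$.
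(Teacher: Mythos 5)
Your proposal is correct in substance, but it takes a genuinely different route from the paper, which supplies no proof mechanism of its own for this lemma: the paper merely verifies the hypotheses of the cited result --- linearity of the constraint together with Assumption~\ref{as:strongcon} gives the restricted strong convexity \eqref{:kktstrong}, i.e., Assumption C of \cite{duchi2016asymptotic} --- and then imports Theorem 3 of \cite{duchi2016asymptotic} wholesale. You perform the same verification, but then replace the citation by a self-contained argument tailored to the slab geometry: identification is equivalent to the scalar event $\tilde{x}^\top z_{k-1}\leq-\epsilon$ holding eventually, and this follows from a drift-versus-martingale split of $\tilde{x}^\top z_{k-1}$ in which strict complementarity $\lambda^*>0$ makes the drift diverge to $-\infty$ at rate $\sum_i\eta_i$ while the martingale part converges almost surely. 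What each approach buys: the paper's citation covers general convex constraint sets via the active-set machinery of \cite{duchi2016asymptotic}, whereas your argument only works for a single linear constraint but is elementary, and it makes explicit the strict-complementarity hypothesis ($\lambda^*>0$) that both the paper's KKT display and the cited theorem quietly require. Two caveats on your write-up. First, your justification of the summability of the martingale increment variances is misstated: Assumption~\ref{as:noisevar} controls the variance of $\nabla L(\theta_i,W)-\nabla L(\theta^*,W)$, not of the noise $g_i-\nabla l(\theta_i)$ itself; the latter does \emph{not} shrink to zero but converges to the noise variance at $\theta^*$ (if it vanished, $\Sigma$ and the entire CLT would be degenerate). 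The repair is immediate: decompose the noise through $\nabla L(\theta^*,W_i)$, use Assumptions~\ref{as:noisevar} and~\ref{as:asymvar} to see the conditional variance stays a.s.\ bounded along convergent trajectories, and obtain summability from $\sum_i\eta_i^2<\infty$, i.e., from $a>1/2$, not from vanishing variance. Second, your argument presupposes the almost-sure convergence $\theta_i\to\theta^*_{DI}$ of the constrained SDA iterates; that is itself Theorem 2 of \cite{duchi2016asymptotic} (which the paper also invokes separately and which does not rely on identification), so your use of it is legitimate and non-circular, but it means the proof is self-contained only modulo that external convergence result.
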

This implies that after a finite number of steps, Algorithm~\ref{alg:DA} almost surely identifies the active set at the optima $A$ correctly. Let $P_A$ denote the projection matrix onto the null space of the active constrained. Then, almost surely,
\begin{align*}
    P_A=I-A^\top(AA^\top)^\dagger A=I-\tilde{x}(\tilde{x}^\top\tilde{x})^\dagger \tilde{x}^\top=I-\frac{\tilde{x} \tilde{x}^\top}{\norm{\tilde{x}}_2^2}.
\end{align*}
Then Theorem~\ref{th:duchiclt} follows from Theorem 4 of \cite{duchi2016asymptotic}.
\end{proof}
\subsection{Proof of Theorem~\ref{th:bootdistconv}}
We first show that $\bt_k^b$ converges almost surely to $\theta^*$ for any $b=1,2,\cdots,B$. In Algorithm~\ref{alg:boot}, the stochastic gradient term $\nabla L(\theta_k^b,W_k)$ is replaced by the perturbed stochastic gradient $V_k^b\nabla L(\theta_k^b,W_k)$. Note that by Assumption~\ref{as:noisevar}, and the fact that $V_k^b$ is independent of $\theta_k^b$, and $W_k$, we have the following inequality for the perturbed noisy gradient,
\begin{align*}
\expec{{V_k^b}^2\norm{\nabla L(\theta,W)-\nabla L(\theta^*,W)}_2^2}{}\leq 2L_E\norm{\theta-\theta^*}_2^2.\numberthis\label{eq:bootnoisevar}
\end{align*}
Then using Theorem 2 of \cite{duchi2016asymptotic} we have, 
\begin{proposition}[Theorem 2, \cite{duchi2016asymptotic}]
Under Assumption~\ref{as:strongcon}-\ref{as:noisevar}, for any $b\in\{1,2,\cdots,B\}$, we have,
\begin{align*}
    \bar{\theta}_k^b\overset{a.s.}{\to}\theta^*.
\end{align*}
\end{proposition}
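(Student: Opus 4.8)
The plan is to observe that the bootstrapped recursion in Algorithm~\ref{alg:boot} (lines 4--6) is structurally identical to the plain weighted SDA of Algorithm~\ref{alg:DA}, the sole difference being that the stochastic gradient oracle $\nabla L(\theta_k^b,W_k)$ is replaced by the multiplier-perturbed oracle $g_k^b \coloneqq V_k^b\nabla L(\theta_k^b,W_k)$. Just as the almost-sure convergence asserted in Theorem~\ref{th:duchiclt} follows from Theorem 2 of \cite{duchi2016asymptotic} applied to the unperturbed oracle, it suffices to check that the hypotheses of that theorem survive the multiplier perturbation; the conclusion $\bar{\theta}_k^b\overset{a.s.}{\to}\theta^*$ then follows by invoking the same theorem for each fixed $b\in\{1,\dots,B\}$.

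The only hypotheses that actually involve the noise are unbiasedness and a controlled second moment. For unbiasedness, let $\mathcal{F}_{k-1}$ denote the $\sigma$-field generated by the past iterates and data. Because $\{V_k^b\}$ is sampled afresh and independently of both $W_k$ and the past-measurable iterate $\theta_k^b$, and because $\expec{V_k^b}{}=1$, I would factor the conditional expectation as
\begin{align*}
\E\left[g_k^b \mid \mathcal{F}_{k-1}\right]=\E[V_k^b]\,\E\left[\nabla L(\theta_k^b,W_k)\mid\mathcal{F}_{k-1}\right]=\nabla l(\theta_k^b),
\end{align*}
so the perturbed oracle remains an unbiased estimator of $\nabla l$. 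For the second moment I would simply invoke the bound \eqref{eq:bootnoisevar} already established above: since $\mathrm{var}[V_k^b]=1$ and $\E[V_k^b]=1$ force $\E[(V_k^b)^2]=2$, the perturbed noise satisfies the analogue of Assumption~\ref{as:noisevar} with constant $2L_E$, which is exactly what the consistency theorem requires.

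The remaining hypotheses of Theorem 2 of \cite{duchi2016asymptotic}---$\mu$-strong convexity of $l$ (Assumption~\ref{as:strongcon}), the Lipschitz-gradient condition (Assumption~\ref{as:lipgrad}) together with the gradient growth bound, and the linear-constraint structure (Assumption~\ref{as:lincon}) with the resulting second-order KKT relation \eqref{:kktstrong}---depend only on the objective $l$ and the feasible set $\Theta$, not on the noise sequence. They therefore carry over verbatim from the setting of Theorem~\ref{th:duchiclt}. With every hypothesis verified for the perturbed recursion, Theorem 2 of \cite{duchi2016asymptotic} yields $\bar{\theta}_k^b\overset{a.s.}{\to}\theta^*$ for each $b$.

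The only genuinely delicate point is the conditional-independence argument underpinning unbiasedness: the mean-$1$ multiplier factors out of the conditional expectation precisely because $V_k^b$ is drawn independently of the current iterate and of the incoming data point $W_k$, a property Algorithm~\ref{alg:boot} enforces by resampling $V_k^b\sim\mathcal{V}$ at every step. I would also remark, to forestall a natural concern, that although $\mathcal{V}=U[1-\sqrt{3},1+\sqrt{3}]$ places mass on negative values, this is harmless here: unbiasedness uses only $\E[V_k^b]=1$ and the variance condition uses only the finite second moment $\E[(V_k^b)^2]$, so the sign of individual multipliers never enters, and the boundedness of $\mathcal{V}$ makes both moments automatically finite.
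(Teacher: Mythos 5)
Your proposal is correct and takes essentially the same route as the paper: the paper likewise verifies that the multiplier perturbation preserves unbiasedness (using $\E[V_k^b]=1$ and the independence of $V_k^b$ from $\theta_k^b$ and $W_k$) and that the perturbed noise satisfies the analogue of Assumption~\ref{as:noisevar} with constant $2L_E$ (its equation \eqref{eq:bootnoisevar}, via $\E[(V_k^b)^2]=2$), and then invokes Theorem 2 of \cite{duchi2016asymptotic}. The paper is merely terser; your spelled-out conditional-expectation factorization and the observation that the remaining hypotheses depend only on $l$ and $\Theta$ are exactly the details it leaves implicit.
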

We re-state Theorem~\ref{th:bootdistconv} here for convenience. 
\begin{theorem}\label{th:bootdistconvapp}
    Let Assumptions~\ref{as:strongcon}-\ref{as:lincon} be true. Then, choosing $\eta_k= k^{-a}$, $1/2<a<1$ in Algorithm~\ref{alg:boot}, for any $b\in{1,2,\cdots,B}$, we have, $\bar{\theta}_k\overset{a.s.}{\to}\theta_{DI[DM]}^*$, and 
    \vspace{-0.1in}
    \begin{enumerate}[wide=0pt]
        \item $
    \sqrt{n}\left(\bar{\theta}_k^b-\bar{\theta}_k\right)\overset{d}{\sim}N\left(0,\Sigma_{DI[DM]}\right),
$
\item \begin{align*}
    \sup_{q\in\mathbb{R}^d}\abs{P(\sqrt{k}(\bar{\theta}_k^b-\bar{\theta}_k)\leq q)-P(\sqrt{k}(\bar{\theta}_k-\theta_{DI[DM]}^*)\leq q)}
    \overset{P}{\to} 0, 
\end{align*}
\item \begin{align*}
    \sup_{c\in[0,1]}\left| P\left(\sqrt{k}\left(\hat{\phi}_{DI[DM]}(\bar{\theta}^b_k)-
    \phi_{DI[DM]}(\bar{\theta}_k)\right)\leq c\right)-\right.
    \left. P\left(\sqrt{k}\left(\phi_{DI[DM]}(\bar{\theta}_k)-\phi_{DI[DM]}(\theta_{DI[DM]}^*)\right)\leq c\right)\right|\overset{P}{\to}0.
\end{align*}
    \end{enumerate}
\end{theorem}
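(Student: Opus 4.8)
The plan is to prove only the Disparate Impact case. For Disparate Mistreatment the penalized objective \eqref{eq:mainprobsampledmuncon} is unconstrained and strongly convex, so all three parts follow directly from Theorem~2 of \cite{fang2018online}; the entire difficulty is the projection present in Algorithm~\ref{alg:DA}. Throughout I write $\theta^*=\theta_{DI}^*$, $g_k=P_A\nabla L(\theta^*,W_k)$, and $P_H=P_A\nabla^2 l(\theta^*)P_A$, where $P_A=I-\tilde{x}\tilde{x}^\top/\norm{\tilde{x}}_2^2$. The central idea is to use finite-time identification of the active constraint to collapse the projected bootstrap dynamics onto the affine face $\{\theta:A\theta=\epsilon\}$, turning the nonlinear recursion into an essentially linear one on the null space of $A$, after which the multiplier-bootstrap analysis of \cite{fang2018online} can be adapted.

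First I would verify that the multiplier perturbation preserves every regularity property needed to run the convergence machinery. Since $V_k^b$ is independent of $(\theta_k^b,W_k)$ with $\mathbb{E}[V_k^b]=1$ and $\mathrm{var}(V_k^b)=1$, so $\mathbb{E}[(V_k^b)^2]=2$, the noise $V_k^b\nabla L(\theta^*,W_k)$ stays unbiased and Assumption~\ref{as:noisevar} upgrades to the perturbed bound \eqref{eq:bootnoisevar}. This lets me invoke Theorem~2 and Theorem~3 of \cite{duchi2016asymptotic} for the bootstrapped sequence: the almost-sure convergence $\bar{\theta}_k^b\to\theta^*$ and, crucially, finite-time identification of the active set, i.e.\ there is a finite $K$ with $A\theta_k^b=\epsilon$ for all $k\ge K$. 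From that index onward the increments live in the range of $P_A$, so $P_A$ acts as the identity on the fluctuations and the effective curvature is the projected Hessian $P_H$. I would also note here that $P_A$ annihilates the KKT-multiplier direction $A^\top$, so $P_A\nabla l(\theta^*)=0$ and the projected gradient $g_k$ is centered.

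The main step is the linearization of the averaged bootstrap error. Expanding the identified recursion and using Assumption~\ref{as:lipgrad} to bound the second-order Hessian remainder, I would establish
\begin{align*}
\sqrt{n}\left(\bar{\theta}_n^b-\bar{\theta}_n\right)=P_H^\dagger\frac{1}{\sqrt{n}}\sum_{k=1}^n (V_k^b-1)\,g_k+o_P(1).
\end{align*}
Conditionally on the data stream, the summands $(V_k^b-1)g_k$ form a martingale-difference array whose conditional covariance $\tfrac1n\sum_k g_k g_k^\top$ converges to $P_A\Sigma P_A$ by the law of large numbers together with Assumption~\ref{as:asymvar}. The martingale CLT of \cite{hall2014martingale} then gives part~1, with limiting covariance $P_H^\dagger P_A\Sigma P_A P_H^\dagger$ which, by the same pseudoinverse computation underlying Theorem~4 of \cite{duchi2016asymptotic}, equals $\Sigma_{DI}$. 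Part~2 is then a conditional-CLT statement: the conditional law of $\sqrt{k}(\bar{\theta}_k^b-\bar{\theta}_k)$ and the unconditional law of $\sqrt{k}(\bar{\theta}_k-\theta^*)$ from Theorem~\ref{th:duchiclt} share the limit $N(0,\Sigma_{DI})$; since this limit has a continuous CDF, pointwise convergence upgrades to the uniform (Kolmogorov) statement by P\'olya's theorem, and the randomness of the bootstrap law yields convergence in probability. For part~3 I would apply the delta method to $\theta\mapsto\phi_{DI}(\theta)$, which is differentiable at $\theta^*$ because the conditional feature distributions admit densities (no point mass), so $\sqrt{k}(\hat{\phi}_{DI}(\bar{\theta}_k^b)-\phi_{DI}(\bar{\theta}_k))$ is asymptotically a fixed linear image of the Gaussian of part~1 and matches the law of $\sqrt{k}(\phi_{DI}(\bar{\theta}_k)-\phi_{DI}(\theta^*))$.

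I expect the linearization in the projected geometry to be the main obstacle. Unlike the unconstrained setting of \cite{fang2018online}, the projection $\Pi_\Theta$ makes the recursion nonlinear, and the argument becomes tractable only after constraint identification restricts the dynamics to the affine face; the delicate part is showing that the finitely many pre-identification steps are asymptotically negligible and that the accumulated Hessian-linearization errors vanish after $\sqrt{n}$-scaling and averaging, uniformly enough to leave the clean $o_P(1)$ remainder above. A secondary subtlety is the non-smoothness of $\phi_{DI}$, an absolute value composed with indicator-type terms: the delta method must be applied at the population level where smoothness holds, and the kink of $\abs{\cdot}$ handled by assuming $\phi_{DI}(\theta^*)$ sits at a differentiable point.
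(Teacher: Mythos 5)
Your proposal is correct and follows essentially the same route as the paper's own proof: the perturbed-noise bound with $\E[(V_k^b)^2]=2$, Theorems 2--3 of \cite{duchi2016asymptotic} for almost-sure convergence and finite-time active-set identification, the projected linearization $\sqrt{n}\left(\bar{\theta}_n^b-\bar{\theta}_n\right)=\left(P_A\nabla^2 l(\theta_{DI}^*)P_A\right)^\dagger\tfrac{1}{\sqrt{n}}\sum_{k=1}^n (V_k^b-1)P_A\nabla L(\theta_{DI}^*,W_k)+o_P(1)$, the martingale CLT of \cite{hall2014martingale} for part 1, the comparison of the two uniform convergence statements for part 2, and the delta method for part 3, with the DM case dispatched via Theorem 2 of \cite{fang2018online} exactly as the paper does. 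Your explicit appeals to P\'olya's theorem and to the differentiability caveat for $\phi_{DI}$ merely make rigorous what the paper leaves implicit.
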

\begin{proof}[Proof of Theorem~\ref{th:bootdistconv}]\label{pf:bootdistconv}
\begin{enumerate}
\item
Note that with \eqref{eq:bootnoisevar}, all the assumptions required for Lemma~\ref{lm:constrid} hold. Then, from Lemma~\ref{lm:constrid} we have that, after $k\geq K$, Algorithm~\ref{alg:boot} almost surely identifies the active constraint at optima. For a perturbed trajectory let $A^b$ denote this active constraint, i.e., $A^b\theta_k^b=\epsilon$ for $k\geq K$. Let $P_{A^b}$ be the orthogonal projector to the null space of $A^b$. Note that $P_{A^b}=P_A$ since $P_A$ is an even function of $A$. Then, for $k\geq K$, $P_A(\theta_k^b-\theta_{DI}^*)=\theta_k^b-\theta_{DI}^*$. So proving the asymptotic normality of the projected sequence $P_A(\theta_k^b-\theta_{DI}^*)$ is sufficient in this case. 
From Algorithm~\ref{alg:boot}, we have,
\begin{align*}
    z_{k}^b-z_{k-1}^b=\eta_{k}V_{k}^b\nabla L(\tb_{k},W_{k}).\numberthis\label{eq:zkinc}
\end{align*}
Here the constraint can be split into two constraints $\tilde{x}^\top \theta\leq \epsilon$ and $(-\tilde{x})^\top \theta\leq \epsilon$. Clearly only one of these constraints can be active at a time. Let's denote the active constraint by $A^b\theta_k^b=\epsilon$. Then by KKT conditions for the projection step (line 4 of Algorithm~\ref{alg:boot}, we have, for some $\lambda_k^b\geq 0$, and $\mu_k^b\geq 0$, 
\begin{align*}
    \tb_{k+1}+z_{k}^b+\lambda_{k}^bA^b-\mu_{k}^b A^b=0. \numberthis\label{eq:thetakbopt}
\end{align*}
Then combining \eqref{eq:zkinc}, and \eqref{eq:thetakbopt}, we get
\begin{align*}
    \tb_{k+1}=\tb_{k}-\eta_{k}V_{k}^b\nabla L(\tb_{k},W_{k})-A^b(\lambda_k^b-\lambda_{k-1}^b)+A^b(\mu_k^b-\mu_{k-1}^b).
\end{align*}
Multiplying both sides by the projection matrix $P_A=I-{A^b}^\top(A^b{A^b}^\top)^\dagger  A^b=I-\tilde{x}\tilde{x}^\top/\norm{\tilde{x}}_2^2$, and using $P_AA^b=0$, we have,
\begin{align*}
    &P_A(\tb_{k+1}-\theta_{DI}^*)\\
    =&P_A(\tb_{k}-\theta_{DI}^*)-\eta_{k}V_{k}^bP_A\nabla L(\tb_{k},W_{k})-P_A A^b(\lambda_k^b-\lambda_{k-1}^b)+P_A A^b(\mu_k^b-\mu_{k-1}^b)\\
    =&(I-\eta_{k}P_A\nabla^2l(\theta_{DI}^*)P_A)P_A(\tb_k-\theta_{DI}^*)+\eta_{k}P_A\nabla^2l(\theta_{DI}^*)P_A(\tb_k-\theta_{DI}^*)
    -\eta_{k}V_{k}^bP_A\nabla L(\tb_{k},W_{k})\\
    =&(I-\eta_{k}P_A\nabla^2l(\theta_{DI}^*)P_A)P_A(\tb_k-\theta_{DI}^*)-\eta_{k}(I-P_A)\nabla^2l(\theta_{DI}^*)P_A(\tb_k-\theta_{DI}^*)+\eta_{k}\nabla^2l(\theta_{DI}^*)P_A(\tb_k-\theta_{DI}^*)\\
    &+\eta_{k}P_A\nabla l(\theta_{DI}^*)-\eta_{k}P_A\nabla l(\theta_k^b)+\eta_{k}P_A\nabla l(\theta_k^b)-\eta_{k}V_{k}^bP_A\nabla L(\tb_{k},W_{k})\\
    =&(I-\eta_{k}P_A\nabla^2l(\theta_{DI}^*)P_A)P_A(\tb_k-\theta_{DI}^*)-\eta_{k}P_A\zeta_k^b+\eta_{k}P_A\xi_k^b+\eta_{k}P_AD_k^b+\epsilon_k^b,
\end{align*}
where 
\begin{align*}
    &\xi_k^b\coloneqq \nabla l(\theta_k^b)-V_{k}^b\nabla L(\tb_{k},W_{k}),\\
    &\zeta_k^b\coloneqq \nabla L(\theta_k^b)-\nabla l(\theta_{DI}^*)-\nabla^2 l(\theta_{DI}^*)(\theta_k-\theta_{DI}^*), \text{and}\\
    &\epsilon_k^b\coloneqq -\eta_k P_A\nabla^2 l(\theta_{DI}^*)(I-P_A)(\theta_k^b-\theta_{DI}^*).
\end{align*}
The last inequality follows from the fact that we have $P_A\nabla l(\theta_{DI}^*)=0$ by optimality properties. 
The following decomposition of $P_A\xi_k^b$ takes place.
\begin{align*}
    P_A\xi_k^b&=P_A\nabla l(\theta_k^b)-P_A\nabla l(\theta_{DI}^*)+V_{k}^bP_A\nabla L(\theta_{DI}^*,W_{k})-V_{k}^bP_A\nabla L(\tb_{k},W_{k})-V_{k}^bP_A\nabla L(\theta_{DI}^*,W_{k})\\
    &=\xi_k^b(0)+\xi_k^b(\theta_k^b),
\end{align*}
where,
\begin{align*}
    \xi_k^b(0)\coloneqq -V_{k}^bP_A\nabla L(\theta_{DI}^*,W_{k})\quad \xi_k^b(\theta_k^b)\coloneqq P_A\left(\nabla l(\theta_k^b)-\nabla l(\theta_{DI}^*)\right)+V_{k}^bP_A\left(\nabla L(\theta_{DI}^*,W_{k})-\nabla L(\tb_{k},W_{k})\right).
\end{align*}
Then, using Young's lemma, and Assumption~\ref{as:noisevar}, we have,
\begin{align*}
    \expec{\norm{\xi_k^b(\theta_k^b)}_2^2}{}\leq C\norm{\theta_k^b-\theta_{DI}^*}_2^2,
\end{align*}
for some constant $C>0$. 
Then using Proposition 2, specifically equation (65) of \cite{duchi2016asymptotic}, we have,
\begin{align*}
    \frac{1}{\sqrt{n}}\sum_{k=1}^n\left(\theta_k^b-\theta_{DI}^*\right)=(P_A\nabla^2 l(\theta_{DI}^*)P_A)^\dagger\frac{1}{\sqrt{n}}\sum_{k=1}^n V_k^bP_A\nabla L(\theta_{DI}^*,W_k)+o_P(1).
\end{align*}
Similarly, from Algorithm~\ref{alg:DA}, we have,
\begin{align*}
    \frac{1}{\sqrt{n}}\sum_{k=1}^n\left(\theta_k-\theta_{DI}^*\right)=(P_A\nabla^2 l(\theta_{DI}^*)P_A)^\dagger\frac{1}{\sqrt{n}}\sum_{k=1}^n P_A\nabla L(\theta_{DI}^*,W_k)+o_P(1).
\end{align*}
Combining the above two equations we get,
\begin{align*}
    \sqrt{n}\left(\bar{\theta}_n^b-\bar{\theta}_n\right)=(P_A\nabla^2 l(\theta_{DI}^*)P_A)^\dagger\frac{1}{\sqrt{n}}\sum_{k=1}^n (V_k^b-1)P_A\nabla L(\theta_{DI}^*,W_k)+o_P(1).
\end{align*}
Then by Martingale central limit theorem, we have that the asymptotic distribution of $\sqrt{n}\left(\bar{\theta}_n^b-\bar{\theta}_n\right)$ is Gaussian with mean 0 and asymptotic covariance, 
\begin{align*}
    \Sigma_{DI}&=\underset{n\to\infty}{\lim}\expec{(P_A\nabla^2 l(\theta_{DI}^*)P_A)^\dagger\frac{1}{n}\sum_{k=1}^n\sum_{i=1}^n (V_k^b-1)P_A\nabla L(\theta_{DI}^*,W_k)(V_i^b-1)\left(P_A\nabla L(\theta_{DI}^*,W_i)\right)^\top{(P_A\nabla^2 l(\theta_{DI}^*)P_A)^\dagger}^\top}{}\\
    \overset{(1)}{=}&\underset{n\to\infty}{\lim}(P_A\nabla^2 l(\theta_{DI}^*)P_A)^\dagger\frac{1}{n}\sum_{k=1}^n \expec{(V_k^b-1)^2}{}P_A\expec{\nabla L(\theta_{DI}^*,W_k)\nabla L(\theta_{DI}^*,W_k)^\top}{} P_A^\top{(P_A\nabla^2 l(\theta_{DI}^*)P_A)^\dagger}^\top\\
    \overset{(2)}{=}&(P_A\nabla^2 l(\theta_{DI}^*)P_A)^\dagger P_A\Sigma P_A^\top{(P_A\nabla^2 l(\theta_{DI}^*)P_A)^\dagger}^\top\\
    \overset{(3)}{=}& P_A\nabla^2 l(\theta_{DI}^*)^\dagger P_A\Sigma P_A\nabla^2 l(\theta_{DI}^*)^\dagger P_A.
\end{align*}
Equality (1) follows from the martingale difference property of $\{(V_k^b-1)\nabla L(\theta_{DI}^*,W_k)\}_k$. Equality (2) follows from the facts that $V_k^b$ is independent of $\nabla L(\theta_{DI}^*,W_k)$, $\expec{(V_k^b-1)^2}{}=1$, and Assumption~\ref{as:asymvar}. Equation (3) follows from the fact $P_A^2=P_A$ since $P_A$ is a projection matrix. This proves first statement of Theorem~\ref{th:bootdistconv}
\item
Let $\vartheta$ denote the Gaussian distribution $N(0,\Sigma_{DI})$. Then, from the previous part we can write, 
\begin{align*}
    \sup_{q\in\mathbb{R}^d}\abs{P(\sqrt{n}\left(\bar{\theta}_n^b-\bar{\theta}_n\right)\leq q)-P(\vartheta\leq q)}\overset{P}{\to}0.\numberthis\label{eq:bootclosetoavg}
\end{align*}
Similarly, from Theorem~\ref{th:duchiclt}, we have
\begin{align*}
    \sup_{q\in\mathbb{R}^d}\abs{P(\sqrt{n}\left(\bar{\theta}_n-\theta_{DI}^*\right)\leq q)-P(\vartheta\leq q)}\overset{P}{\to}0.\numberthis\label{eq:avgclosetostar}
\end{align*}
Combining \eqref{eq:bootclosetoavg}, and \eqref{eq:avgclosetostar}, we get the second part of the Theorem~\ref{th:bootdistconvapp},
\begin{align*}
    \sup_{q\in\mathbb{R}^d}\abs{P(\sqrt{n}\left(\bar{\theta}_n^b-\bar{\theta}_n\right)\leq q)-P(\sqrt{n}\left(\bar{\theta}_n-\theta_{DI}^*\right)\leq q)}\overset{P}{\to}0.\numberthis\label{eq:bootthetaconsistency}
\end{align*}
Now note that for a given $\bar{\theta}_n^b$, as $|\tilde{D}|\to\infty$,
\begin{align*}
    \hat{\phi}_{DI}(\bar{\theta}^b_n)\overset{a.s.}{\to}{\phi}_{DI}(\bar{\theta}^b_n)\numberthis\label{eq:phihattophi}
\end{align*} 
by strong law of large numbers.
\item
Since, $\phi_{DI}(\theta)$ is differentiable w.r.t $\theta$, using delta method, from \eqref{eq:avgclosetostar} we have,
\begin{align*}
    \sqrt{n}(\phi_{DI}(\bar{\theta}_n)-\phi_{DI}(\theta_{DI}^*))\overset{d}{\to}N(0,\nabla\phi_{DI}(\theta_{DI}^*)^\top\Sigma_{DI}\nabla\phi_{DI}(\theta_{DI}^*)).\numberthis\label{eq:phiavgclostophistar}
\end{align*}
Using delta method, and from \eqref{eq:bootclosetoavg} we have,
\begin{align*}
    \sqrt{n}(\phi_{DI}(\bar{\theta}_n^b)-\phi_{DI}(\bar{\theta}_n))\overset{d}{\to}N(0,\nabla\phi_{DI}(\bar{\theta}_{n})^\top\Sigma_{DI}\nabla\phi_{DI}(\bar{\theta}_{n})).\numberthis\label{eq:phibootclostophiavg}
\end{align*}
Now, since $\bar{\theta}_n\overset{a.s.}{\to}\theta_{DI}^*$, we have,
\begin{align*}
    \nabla\phi_{DI}(\bar{\theta}_{n})^\top\Sigma_{DI}\nabla\phi_{DI}(\bar{\theta}_{n})\overset{a.s.}{\to}\nabla\phi_{DI}(\theta_{DI}^*)^\top\Sigma_{DI}\nabla\phi_{DI}(\theta_{DI}^*).\numberthis\label{eq:phiavgcovclostophistarcov}
\end{align*}
Combining, \eqref{eq:phihattophi}, \eqref{eq:phiavgclostophistar}, \eqref{eq:phibootclostophiavg}, and \eqref{eq:phiavgcovclostophistarcov}, using the same argument as part 2, we have, as $|\tilde{\mathcal{D}}|\to\infty$,
\begin{align*}
    \sup_{c\in[0,1]}\abs{P\left(\sqrt{n}\left(\hat{\phi}_{DI}(\bar{\theta}^b_n)-\phi_{DI}(\bar{\theta}_n)\right)\leq c\right)-P\left(\sqrt{n}\left(\phi_{DI}(\bar{\theta}_n)-\phi_{DI}(\theta_{DI}^*)\right)\leq c\right)}\overset{P}{\to}0.\numberthis\label{eq:phibootthetaconsistencyapp}
\end{align*}
\iffalse
 Recall that, 
\begin{align*}
    \phidi(\theta)=\lvert P(h(x;\theta)=1|z=0)-P(h(x;\theta)=1|z=1)\rvert
    =\abs{
  \int_{\delta_\theta(x)>0}(\rho(x|z=0)-\rho(x|z=1))dx}.
\end{align*}

Since $\phi_{DI}(\cdot)$ is a continuous function $\bar{\theta}^b_k$, $\bar{\theta}^b_k\overset{a.s.}{\to}\theta_{DI}^*$, and $\bar{\theta}_k\overset{a.s.}{\to}\theta_{DI}^*$, by continuous mapping theorem $\phi_{DI}(\bar{\theta}^b_k)\overset{a.s.}{\to}\phi_{DI}(\bar{\theta}_k)$, and $\phi_{DI}(\bar{\theta}^b_k)\overset{a.s.}{\to}\phi_{DI}({\theta}^*)$. 
\fi
\end{enumerate}
\end{proof}

\section{Details on the Loss Functions}\label{sec:lossfn}
\subsection{Cross-Entropy Loss}
In case of CE loss $l(\theta)$ is of the form
\begin{align*}
l(\theta)=\expec{\log\left(1+\exp\left(-yx^\top\theta\right)\right)}{}+R(\theta).
\end{align*}
It is easy to see that canonical CE loss is strictly convex and hence we need $\varkappa>0$. 
\subsection{Squared Loss}
We use the one-hot encoding of the class for the squared loss. One-hot encoding of a class $c$ is given by the vector $y_v$ where $y_v[i]=\mathbbm{1}(i=c), \ i=1,2$. In this case, the decision boundary is given by, $$\delta_\theta (x)=\begin{bmatrix}1,  -1\end{bmatrix}\Theta x,$$
where $x\in\mathbb{R}^d$, and $\Theta\in \mathbb{R}^{2\times d}$. Then the optimization problem we solve here is given by
\begin{equation}
\begin{aligned}
    \min &\quad \expec{ \norm{y_{v}-\Theta x}_2^2}{}+R(\theta)\\
    \text{subject to}&\quad \frac{1}{n_c}\abs{\sum_{i=1}^{n_c} (z_i-\bar{z})\begin{bmatrix}1 , -1\end{bmatrix}\Theta x_i}\leq \epsilon. \label{eq:mainprobsquare}
    \end{aligned}
\end{equation}
Let $\Theta_i, i=1,2$ denote the $i$-th row of $\Theta$. Let $\Theta_v=\begin{bmatrix}
    \Theta_1^\top;\Theta_2^\top
\end{bmatrix}$ be the vectorized version of $\Theta$, and let $x_{v,i}=(x_i^\top, -x_i^\top)^\top$. Then the optimization problem \eqref{eq:mainprobsquare} can also written as 
\begin{align*}
     \min & \quad\expec{(y_v[1]-\Theta_1 x)^2+(y_v[2]-\Theta_2 x)^2}{}+R(\theta)\\
    \text{subject to}&\quad \frac{1}{n_c}\abs{\sum_{i=1}^{n_c}  (z_i-\bar{z})x_{v,i}^\top\Theta_v }\leq \epsilon. \numberthis\label{eq:mainprobsquare}
\end{align*}
Note that for squared loss, if the features are linearly independent, then the covariance matrix $\expec{xx^\top}{}$ is positive definite and $\varkappa$ can be set to $0$. 
\section{Further Details on Experiments}
For all the experiments we choose the step size $\eta_k=0.01k^{-0.501}$, and $\varkappa=0.0001$. The detailed choice of experimental setup is provided in Table~\ref{tab:exp}.

\begin{table}[h]
\centering
\begin{tabular}{|l|l|l|l|l|l|l|}
\hline
Dataset        & $\epsilon$              & B   & $|\tilde{\mathcal{D}}|$ & $|{\mathcal{D}}'|$& $R_2$                   & \#Repetitions \\ \hline
Synthetic (DI) & 0.002                   & 100 & 100 & 200             & \multicolumn{1}{c|}{--} & 100           \\ \hline
Adult          & 0.00001                 & 200 & 1000  & 1000           & \multicolumn{1}{c|}{--} & 200           \\ \hline
Synthetic (DM) & \multicolumn{1}{c|}{--} & 100 & 100 & 250             & 500                     & 100           \\ \hline
COMPAS         & \multicolumn{1}{c|}{--} & 200 & 400  & 250            & 300                     & 200           \\ \hline
\end{tabular}
\caption{The choice of parameters across experiments.}\label{tab:exp}
\end{table}

\newpage
\section{Plots corresponding to Squared Loss}\label{sec:sqplots}
\begin{figure}[h] 
    \centering
    \subfigure{\label{fig:W2_sqr_synthetic}\includegraphics[width=40mm]{W2_sqr_synthetic.png}}
    \subfigure{\label{fig:CI_sqr_synthetic}\includegraphics[width=40mm]{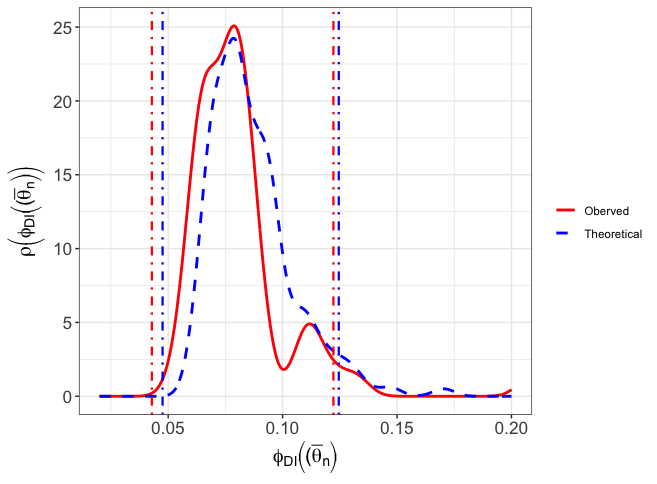}}
    \subfigure{\label{fig:boot_coverage_sqr_synth}\includegraphics[width=40mm]{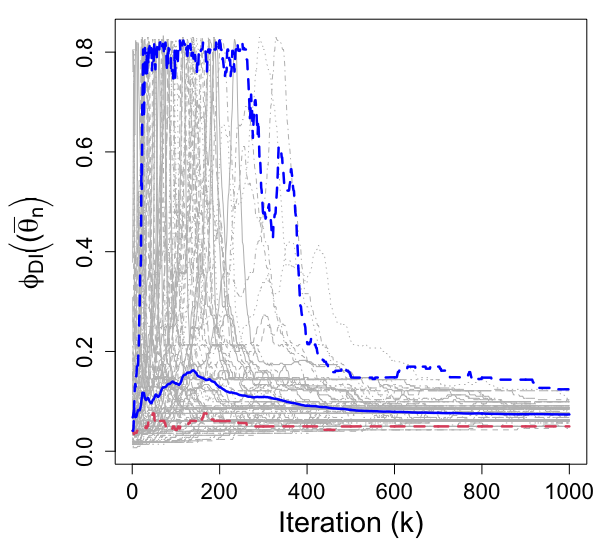}}
    \subfigure{\label{fig:MIS_sqr_synth}\includegraphics[width=40mm]{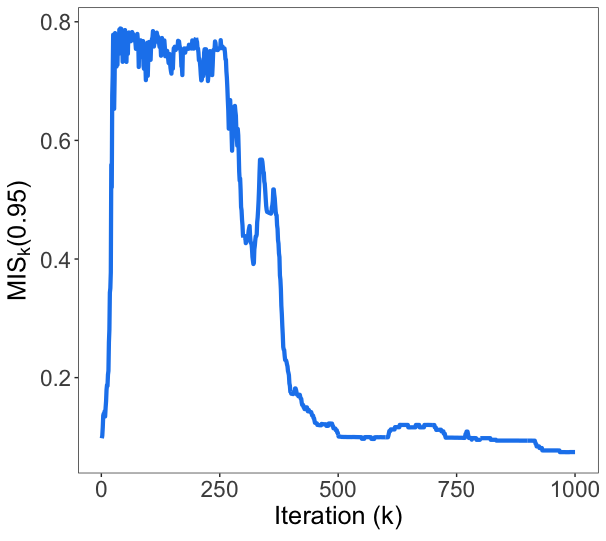}}
    \caption{Top row: Convergence of $W_2\left(\phi_{DI}(\bar{\theta}_k),\phi_{DI}(\bar{\theta}_\infty)\right)$, and comparison of theoretical asymptotic density and observed density of $\phi_{DI}(\bar{\theta}_k)$; Bottom row: Online Bootstrap 95\% CI and $MIS(\hat\phi_{DI,0.025},\hat\phi_{DI,0.975};0.95)$ under squared loss for synthetic data at risk of DI.}\label{fig:square_synth_DI}
    \vspace{-0.05in}
\end{figure}
\begin{figure}[h]
    \centering
    \subfigure{\label{fig:W2_sqr_adult}\includegraphics[width=40mm]{W2_sqr_adult.png}} 
    \subfigure{\label{fig:CI_sqr_adult}\includegraphics[width=40mm]{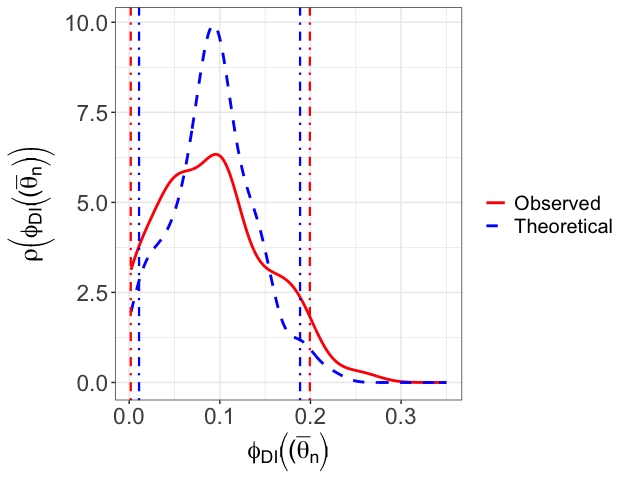}}
    \subfigure{\label{fig:boot_coverage_sqr_adult_new}\includegraphics[width=40mm]{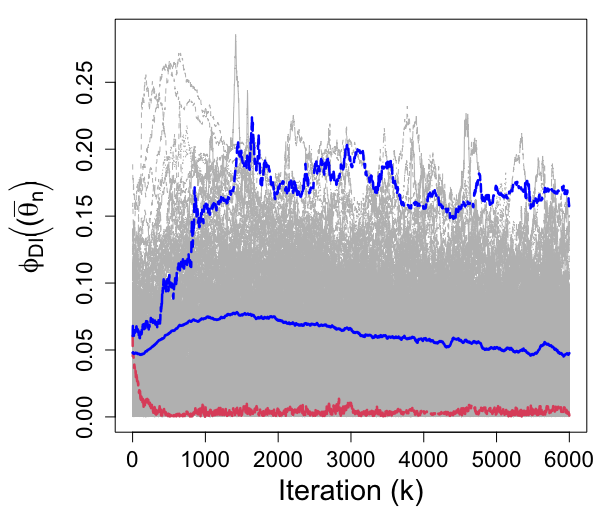}}
    \subfigure{\label{fig:MIS_sqr_adult_new}\includegraphics[width=40mm]{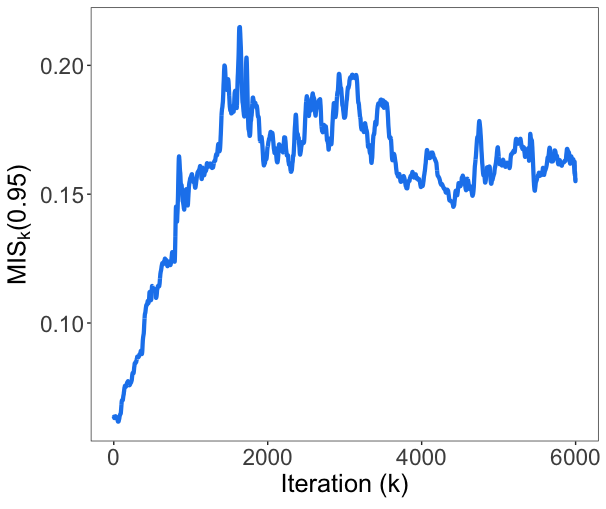}}
    \caption{Top row: Convergence of $W_2\left(\phi_{DI}(\bar{\theta}_k),\phi_{DI}(\bar{\theta}_\infty)\right)$, and comparison of theoretical asymptotic density and observed density of $\phi_{DI}(\bar{\theta}_k)$; Bottom row: Online Bootstrap 95\% CI and $MIS(\hat\phi_{DI,0.025},\hat\phi_{DI,0.975};0.95)$ under squared loss for Adult data at risk of DI.}\label{fig:square_adult_di}
    \vspace{-0.05in}
\end{figure}
 \begin{figure}
    \centering
    \subfigure{\label{fig:W2_sqr_synthetic_DM}\includegraphics[width=40mm]{W2_sqr_synthetic_DM.png}} 
    \subfigure{\label{fig:CI_sqr_synthetic_DM}\includegraphics[width=40mm]{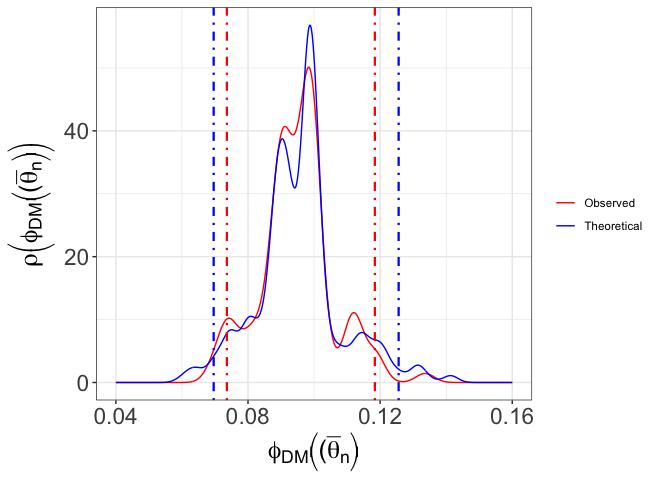}}
    \subfigure{\label{fig:boot_coverage_sqr_synth_DM}\includegraphics[width=40mm]{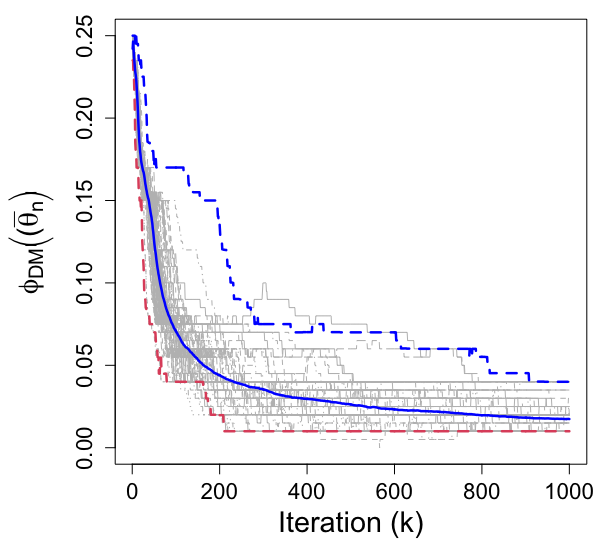}}
    \subfigure{\label{fig:MIS_sqr_synth_DM}\includegraphics[width=40mm]{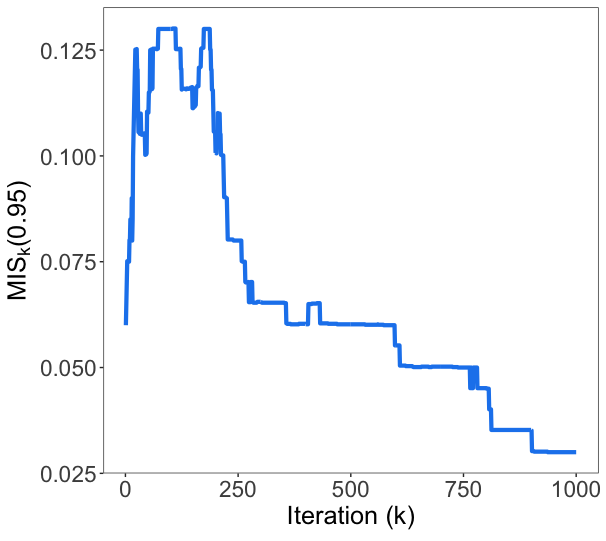}}
    \caption{Top row: Convergence of $W_2\left(\phi_{DM}(\bar{\theta}_k),\phi_{DM}(\bar{\theta}_\infty)\right)$, and comparison of theoretical asymptotic density and observed density of $\phi_{DM}(\bar{\theta}_k)$; Bottom row: Online Bootstrap 95\% CI and $MIS(\hat\phi_{DM,0.025},\hat\phi_{DM,0.975};0.95)$ under squared loss for synthetic data vulnerable to DM.}\label{fig:square_synthetic_DM}
    \vspace{-0.05in}
\end{figure}
 \begin{figure}[h]
 \centering
    \subfigure{\label{fig:W2_sqr_compas_DM}\includegraphics[width=40mm]{W2_sqr_compas_DM.png}}
    \subfigure{\label{fig:CI_sqr_compas_DM}\includegraphics[width=40mm]{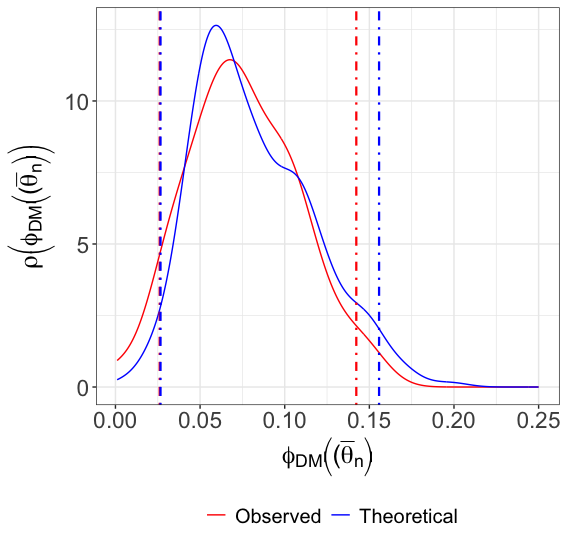}}
    \subfigure{\label{fig:boot_coverage_sqr_compas}\includegraphics[width=40mm]{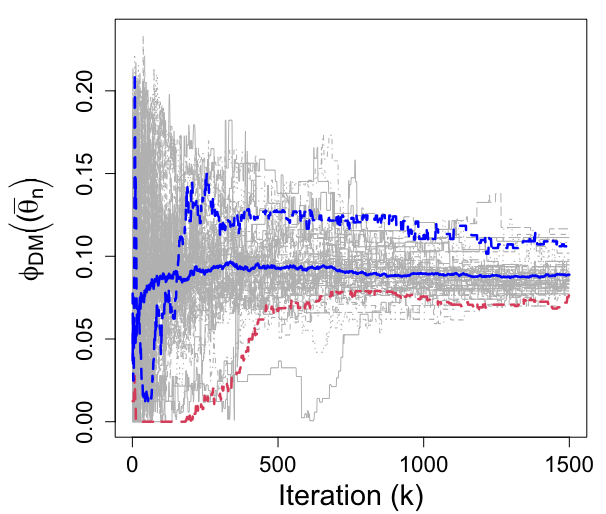}}
    \subfigure{\label{fig:MIS_sqr_compas}\includegraphics[width=40mm]{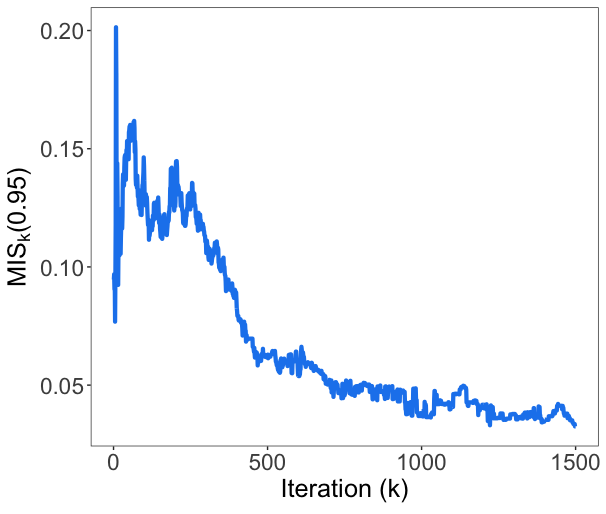}}
    \caption{Top row: Convergence of $W_2\left(\phi_{DM}(\bar{\theta}_k),\phi_{DM}(\bar{\theta}_\infty)\right)$, and comparison of theoretical asymptotic density and observed density of $\phi_{DM}(\bar{\theta}_k)$; Bottom row: Online Bootstrap 95\% CI and $MIS(\hat\phi_{DM,0.025},\hat\phi_{DM,0.975};0.95)$ under squared loss for COMPAS data.}\label{fig:square_compas_DM}
    \vspace{-0.05in}
\end{figure}
\end{document}